\documentclass{article}

\PassOptionsToPackage{numbers, compress}{natbib}

\usepackage[preprint]{neurips_2020}

\usepackage[utf8]{inputenc} %
\usepackage[T1]{fontenc}    %
\usepackage{hyperref}       %
\usepackage{url}            %
\usepackage{booktabs}       %
\usepackage{amsfonts}       %
\usepackage{nicefrac}       %
\usepackage{microtype}      %

\usepackage{graphicx}
\usepackage{subfigure}
\usepackage{wrapfig}
\usepackage{framed}
\usepackage{listings}
\usepackage[inline]{enumitem}

\usepackage{algorithm}
\usepackage{algpseudocode}
\usepackage{multirow}
\usepackage[table]{xcolor}

\usepackage{minitoc}

\usepackage{symbols}

\title{Strong Generalization and Efficiency \\in Neural Programs}

\author{
  Yujia Li,
  Felix Gimeno,
  Pushmeet Kohli,
  Oriol Vinyals\\
  DeepMind\\
  \texttt{\{yujiali,fgimeno,pushmeet,vinyals\}@google.com}
}

\begin{document}

\maketitle

\begin{abstract}
We study the problem of learning efficient algorithms that strongly generalize in the framework of neural program induction. By carefully designing the input / output interfaces of the neural model and through imitation, we are able to learn models that produce correct results for arbitrary input sizes, achieving strong generalization. Moreover, by using reinforcement learning, we optimize for program efficiency metrics, and discover new algorithms that surpass the teacher used in imitation. With this, our approach can learn to outperform custom-written solutions for a variety of problems, as we tested it on sorting, searching in ordered lists and the NP-complete 0/1 knapsack problem, which sets a notable milestone in the field of Neural Program Induction.
As highlights, our learned model can perform sorting perfectly on any input data size we tested on, with $O(n\log n)$ complexity, whilst outperforming hand-coded algorithms, including quick sort, in number of operations even for list sizes far beyond those seen during training.
\end{abstract}

\doparttoc
\faketableofcontents

\section{Introduction}

Innovation in algorithm design has two important goals: \emph{correctness} and \emph{efficiency}.  \emph{Correctness} is about producing the right outputs for arbitrary inputs, and \emph{efficiency} is about consuming as few resources as possible such that an algorithm runs fast, or uses little memory.

Neural networks have been studied as a way to represent algorithms for solving concrete problems such as sorting numbers or finding information in databases \cite{graves2014neural,reed2015neural,vinyals2015order,neelakantan2015neural}, but with two important limitations. First,   
few research efforts have been able to demonstrate strong generalization, \ie{}~correctness beyond the training distribution. Second, efficiency has been hard to demonstrate, or has not been the main focus of such efforts, although improving efficiency is one of the most appealing consequences of learning algorithms.

The objective of this work is to learn algorithms represented by neural networks that strongly generalize and are more efficient than hand-crafted algorithms. For the first time, we show that this objective can be attained, via a combination of imitation and RL, and a judicious design of the input and output interfaces of the neural networks.

The framework we use throughout our study is that of neural program induction \cite{graves2014neural,joulin2015inferring,reed2015neural,gaunt2016terpret,devlin2017neural}, which can be stated as having a neural controller $f$ which, from a particular program execution state $s$, issues an instruction $a = f(s)$ which has some pre-determined semantics over how it transforms $s$. This is iterated up to meeting some termination condition or a maximum number of steps.  The neural network parameterizes a policy distribution $p(a|s)$, which induces such a controller.  This model can be trained to imitate existing teacher algorithms via supervised learning, or by RL to optimize efficiency metrics to improve further.

Our first insight in this study is that setting up a neural network's input and output interfaces is critical for both generalization and efficiency. Just like in computer architecture design, the instruction set and how we access the data has great implications on the performance of the system.
Our second insight is that, in the space of algorithmic programs, supervised learning can at best match a teacher, but RL can help the model surpass it, as it has been observed in other domains such as games \cite{silver2016mastering, vinyals2019grandmaster}.

We show that our approach provides a promising paradigm for discovering new solutions for algorithmic tasks with learning, and our learned models can outperform comparable hand-coded %
programs in terms of efficiency in a range of tasks, including sorting, searching in ordered lists, and a version of the NP-complete 0/1 knapsack problem, whilst strongly generalizing to any tested input complexity.
In particular, we present a progression of models and interface designs that allows our model to learn $O(n^3)$, $O(n^2)$ and $O(n\log n)$ sorting algorithms, and outperforming our hand-coded bubble sort, insertion sort and quick sort algorithms.

The main contributions of this paper include:
    (1) learning neural networks for algorithmic tasks that generalize to instances of arbitrary length we tested on;
    (2) demonstrating the implications of model interfaces on generalization and efficiency, taking as inspiration CPU instruction sets;
    (3) using imitation and reinforcement learning to optimize efficiency metrics and discover new algorithms which can outperform strong teachers.

\section{Related work}
Training neural networks to solve algorithmic tasks has been a challenging domain for deep learning.  Our approach fits in the neural controller - interface framework \cite{graves2014neural,reed2015neural,zaremba2016learning} where a neural model interacts with external interfaces, like output or memory, through a range of instructions, that for example moves a write head, or changes a symbol in the output.  
The neural controller executes a sequence of such instructions until a task is solved or some termination condition is met.

\textbf{Optimizing for efficiency. }
We care not only about learning a ``correct'' algorithm, but also an ``efficient'' one.  As far as we are aware of, this is the first work in neural program induction that demonstrates improvement in efficiency over hand-coded algorithms, and we achieve this through reinforcement learning.
Program optimization has been studied in the area of superoptimization \cite{massalin1987superoptimizer}, which has seen success optimizing for example loop-free assembly code \cite{schkufza2013stochastic,bunel2016learning}.  However, this line of research so far still cannot handle more complex programs with control flow.

\textbf{Training on I/O examples vs on traces. }  %
Many prior works in neural program induction
learn only from input / output examples, and predicts the outputs directly without emitting a sequence of instructions \cite{kaiser2015neural,devlin2017neural}.
On the other side, when execution traces of intermediate steps are available, the learning task can be a lot simpler
and the learned models
more interpretable.
The Neural Programmer Interpreter (NPI) \cite{reed2015neural} model is a notable example of this.  NPI also proposed a way to learn and use functions, making the model more modular.
Follow up work like \cite{li2016neural,pierrot2019learning} explored ways to reduce the amount of supervision needed for training, and \cite{cai2017making} proposed to allow recursive function calls which can even lead to provably correct imitation of known algorithms.

\textbf{Differentiable vs non-differentiable instructions. }
Most existing work on neural program induction makes the instructions differentiable, such that the whole system is trainable end-to-end with gradient descent \cite{graves2014neural,joulin2015inferring,andrychowicz2016learning,neelakantan2015neural,graves2016hybrid}.  However, differentiable instructions are (1) typically more costly to compute,
and (2) tend to not generalize as well as discrete instructions \cite{kurach2015neural,joulin2015inferring,andrychowicz2016learning}.
In \cite{zaremba2015reinforcement,zaremba2016learning} the authors tried to learn with discrete instructions using RL, but reported difficulty in training.

\textbf{Recurrent vs non-recurrent memory. }
Most previous works also recommend using a form of recurrent neural network
as the controller, which has its own internal differentiable memory, while we propose to remove all the differentiable memory inside the model and if needed have memory in the environment instead, modified through discrete instructions.  These changes remove accumulating errors, which plays a significant role in attaining strong generalization when long action sequences (millions of steps in our experiments) are needed.  This observation is also shared by prior work \cite{giles1992learning}.  

\textbf{Comparison with NPI. } Our work is closely related to NPI \cite{reed2015neural,cai2017making},
with a few important differences.  Most notably: (1) our input representations are more carefully designed for the tasks; (2) we do not have recurrent memory in the model; (3) we have more programmatic semantics for function calls; and (4) we can use RL to learn models without supervised teacher traces, or RL + imitation to surpass the teacher.  These differences will be discussed in more detail later. 

\section{Method}
\vspace{-0.4em}
\subsection{Inputs, instructions and models}
\vspace{-0.4em}

We employ the neural program induction paradigm to learn neural networks to solve algorithmic tasks.  These networks are controllers that interact with external interfaces using a range of instructions.  The overall architecture of this neural controller is illustrated in \figref{fig:controller-interface}.

\begin{figure}[t]
\centering
\begin{minipage}[b]{.48\textwidth}
  \centering
  \includegraphics[width=0.8\textwidth]{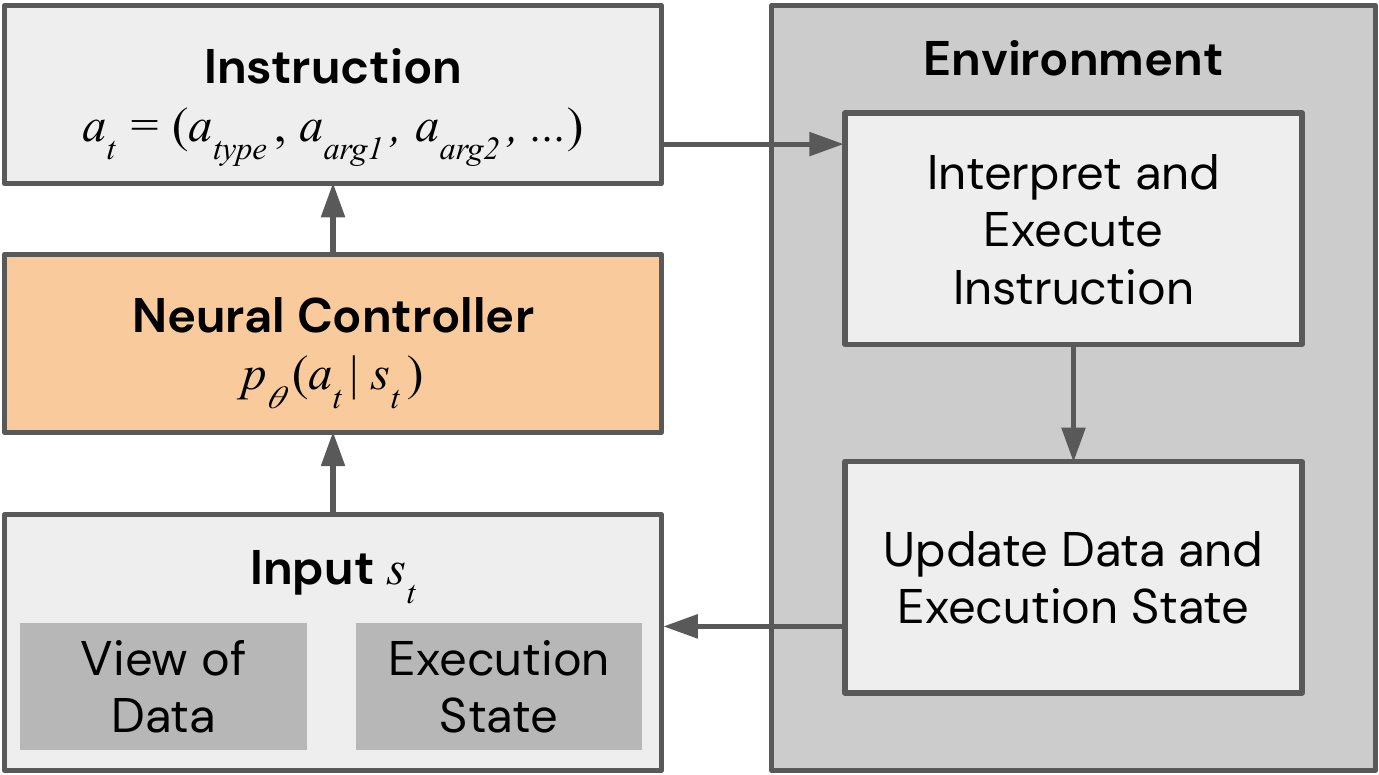}
  \vspace{-.5em}
  \caption{The overall architecture.}
  \label{fig:controller-interface}
\end{minipage}%
\hspace{1em}
\begin{minipage}[b]{.48\textwidth}
  \centering
  \includegraphics[width=0.95\textwidth]{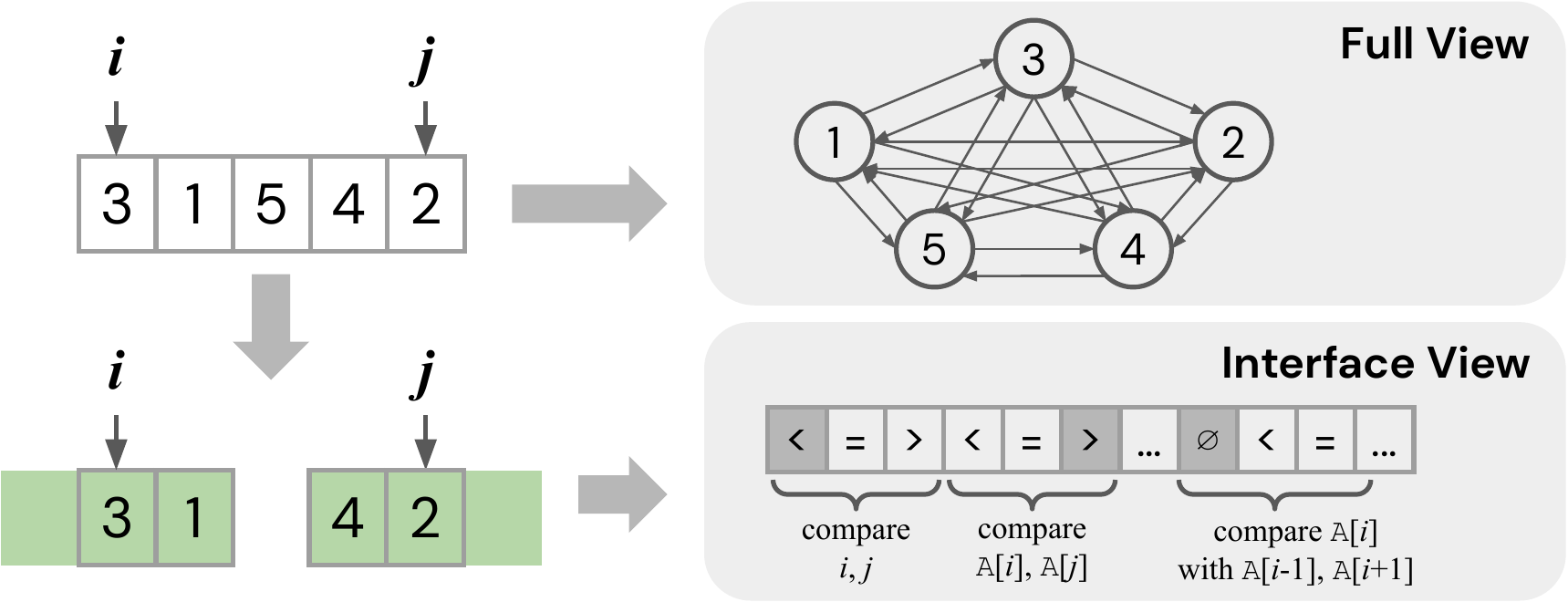}
  \vspace{-.5em}
  \caption{Full view of data vs. interface view.}
  \label{fig:data-views}
\end{minipage}
\vspace{-0.3em}
\end{figure}

\textbf{Input states. } Our neural controller takes a view of the data and execution state information as input $s_t$.  The exact form and content of the input is problem-dependent, conforming to the spec of the algorithm that we seek to learn and improve upon.
In the simplest case, this is a vector concatenation of all the available information in the problem setup.

What a neural controller can ``see'' as input plays a significant role in its generalization and efficiency.  In particular, if $s_t$ includes the full data instance at each step, processing $s_t$ would take at least $O(n)$ time, where $n$ is the instance size.  This might limit the learning of efficient algorithms as the cost of each step grows at least linearly with $n$.  On the other hand, models learned on this type of inputs would be sensitive to the instance length, and typically do not generalize well to instances of sizes not seen during training, a known shortcoming of statistical models such as neural networks \cite{vapnik2013nature}.

We found using a constant size input $s_t$ instead to be helpful for both efficiency and generalization.  This requires a constant size partial view of the data, and constant amount of execution state information, analogous to having a constant number of registers in a computer processing unit.

\textbf{Instruction set. } The instruction set defines what actions the neural controller can do.
Just like in computer architecture design, the instruction set plays a critical role for the system's success.
Notably, prior works on neural program induction
have not explored the implications of different choices of instruction sets.  For example, many works \cite{graves2014neural,graves2016hybrid} are based on a Turing machine model with read / write heads on memory and input / output tapes. The heads can only move one slot at a time, making it a linear time operation to move a head to a specified location.  Even though the Turing machine model is powerful, it is not efficient for most tasks.

We aim to design an instruction set that is rich enough to represent a variety of algorithms such that our neural programs live in an interesting space for learning, but also simple enough to have the potential to generalize across multiple tasks.  A typical instruction set contains instructions that change the data and may be task related, \eg~swap in a sorting task, but also instructions that manipulate execution state and control flow, that are task agnostic, \eg~moving a variable, or calling a function.

In our implementation, each instruction has a type $a_\text{type}$, as well as a list of arguments $a_\text{arg1}, a_\text{arg2}, ...$.
The arguments specify how and where the instruction should be executed.  This way of structuring instructions defines a structured action space.
Note that different instruction types may have a
different number of arguments,
and the arguments may have different types, e.g. binary or categorical.

The instruction set together with the input representations jointly determine the class of algorithms that are learnable by the neural controller, and we see this as a fruitful avenue for future research akin to how current instruction sets shaped microprocessors.

\textbf{The model. }
Our neural controller defines the mapping $a_t=f(s_t)$ through a policy distribution $p_\theta(a_t|s_t)$, with parameters $\theta$.
Based on the structure of $a_t$, $p_\theta(a_t|s_t)$ factorizes as the following:
\begin{equation}
    p_\theta(a_t|s_t) = p_\theta(a_\text{type}|s_t) p_\theta(a_\text{args}|a_\text{type}, s_t).
\end{equation}
The first part $p_\theta(a_\text{type}|s_t)$ is a categorical distribution over instruction types.  The second part 
is a type specific model of the argument list, 
and we use an autoregressive model for this, as
\begin{equation}
    p_\theta(a_\text{args}|a_\text{type}, s_t) = p_\theta(a_\text{arg1}|a_\text{type}, s_t) p_\theta(a_\text{arg2}|a_\text{arg1}, a_\text{type}, s_t) ...
\end{equation}
This decomposition of policy distribution for structured actions can also be found in \cite{vinyals2019grandmaster,openai2019dota}.
We consider 3 argument types: binary, integer and pointer (variable sized), %
modelled with Bernoulli, multinomial distributions and pointer mechanisms \cite{vinyals2015pointer}.
Depending on whether $s_t$ is fixed sized or variable sized, we use either MLPs or GNNs to process it. 
Each factor of $p_\theta(a_t|s_t)$, e.g. $p_\theta(a_\text{type}|s_t)$ or $p_\theta(a_\text{arg1}|a_\text{type}, s_t)$, is modelled using one of these neural networks, with proper output distributions.

\subsection{Supervised learning}

We first tried supervised learning on step-by-step traces from an existing, maybe hand-coded,
algorithm, \ie~the teacher, as in \cite{reed2015neural,li2016neural,cai2017making}. This paradigm is also known as behavior cloning (BC) in imitation learning. The teacher generates traces $(s_0, a_0, s_1, a_1, ...)$
following its %
policy $a_t = T(s_t)$, and the model is trained to minimize the standard negative log-likelihood objective $-\sum_t \log p_\theta(a_t|s_t)$.
Across all the experiments, our model can usually perfectly imitate the teacher within very few iterations.  However, models learned in such a way can at best match the teacher.

\subsection{Surpassing the teacher with RL}
\label{sec:model-rl}

We subsequently explored RL as a natural next step for optimizing the model to improve a certain efficiency metric, aiming to surpass the teacher, similar in spirit to \cite{silver2016mastering,vinyals2019grandmaster}.
The reward in RL is defined to encourage improving this metric. %
We target minimizing running time, by minimizing the number of steps required to solve an instance.
To do so, %
we define per-step reward $r_t=-c$ to be a constant negative penalty.
One episode terminates when a certain termination criterion is met, e.g. fully solving the task, or after taking a maximum allowed number of steps.
Therefore the less steps an agent takes in an episode, the higher the episode reward would be.  Additionally, we may design incremental rewards \cite{ng1999policy} to encourage the agent to learn the correct behavior faster.

We use $n$-step policy gradient
for learning, see e.g. \cite{sutton2018reinforcement}, and update the parameters $\theta$ and $\phi$ in the direction of
\begin{equation}
    \label{eqn:rl-objective}
    [G_t - V_\phi(s_t)]\nabla_\theta \log p_\theta(a_t|s_t) + \mu\nabla_\phi[G_t - V_\phi(s_t)]^2,
\end{equation}
where $G_t = r_t + \gamma r_{t+1} + \cdots + \gamma^{n-1}r_{t+n-1} + \gamma^{n}V_\phi(s_{t+n})$ is the bootstrapped $n$-step return, $\mu$ is a scalar weight, and $V_\phi(s)$ is a value estimate for state $s$, which could be implemented using a similar architecture as the policy model, or using simply a data-independent learned scalar parameter.  We tried both and found that using a single scalar provided more stability in learning in our experiments.

Learning with pure RL from scratch is difficult due to our large action space, long episodes, and sparse reward.
We therefore consider adding an auxiliary imitation loss to the policy gradient objective.
We collect traces $(s_t, a_t, r_t, s_{t+1}, a_{t+1}, r_{t+1}, ...)$ from the policy $p_\theta$ being trained, and then for each step $t$, query the teacher policy to get $a_t'=T(s_t)$.  The extra imitation loss is simply the negative log-likelihood of the teacher action under the current policy $-\lambda\log p_\theta(a_t'|s_t)$, similar to \cite{ross2010efficient,vinyals2019grandmaster}.  The scalar weight $\lambda$ is 
set to $\lambda=0.001$ in our experiments, which we found traded off rapid learning from the teacher, whilst retaining enough flexibility to improve upon it. %

\section{Problems, Interfaces and Experiments}
\label{sec:exp}
In this section, we explore the design choices of the input / output interfaces for the neural controller, and different learning paradigms on a range of tasks, building a powerful and generic instruction set which unlocks generalization and efficiency.  We use the sorting task as a driving case study and present a progression of designs that enable learning of strongly generalizable models, and improve time complexity from $O(n^3)$ to $O(n^2)$ and eventually $O(n \log n)$.  We then show the generality of this framework on two extra tasks, search in ordered lists and the 0/1 knapsack problem, where our learned models
outperform comparable hand-written programs.

The sorting task has been extensively used as a standard problem to test %
neural program induction models \cite{graves2014neural,vinyals2015order,reed2015neural,cai2017making,pierrot2019learning}.
Formally, we aim to sort a range of an array $\A[\low], \A[\low+1], ..., \A[\high]$ in ascending order. We denote $n=\high - \low+1$ as the number of elements to be sorted.

\subsection{Generalization / Correctness}

The generalization or correctness we aim for is mostly about generalization to instances of arbitrary sizes.  As far as we are aware of, most prior works tried but failed to achieve this, with \cite{cai2017making} as an exception.
For the sorting task we also care about generalization across numeric ranges, which previous work did not attempt.
As part of any sorting API, the user is asked to provide a comparator, which abstracts away the actual numeric values in the input, and only focuses on how the elements compare.  We implement the same solution here and allow the input to only contain the comparison results, \ie~is $\A[i] > \A[j]$, or $\A[i] = \A[j]$, or $\A[i] < \A[j]$.

\subsubsection{A model that does not generalize}
\label{sec:model-full-view}

The first attempt at this sorting task uses only a single instruction type ``Swap($i, j$)'',
and the input $s_t$ contains information about the full list at every step, through  %
the comparison results for all the $O(n^2)$ pairs of elements in \A.  We call this the \emph{full view} of data, as illustrated in \figref{fig:data-views}.

Since the size of the input is variable, we use graph neural networks (GNNs) \cite{battaglia2018relational,gilmer2017neural,li2015gated} to predict the arguments $i$ and $j$.
The input is structured as a fully connected graph of $n$ nodes for the $n$ elements in \A, with each edge containing a feature vector indicating the relative position of the pair and the comparison results.  
Note that each instruction is computed in $O(n^2)$ time as all $O(n^2)$ edges need to be processed, and sorting a list may require $O(n)$ swaps, therefore this interface can give us $O(n^3)$ algorithms at best.  More input and model details can be found in \appendixref{appendix:input-details} and \ref{appendix:model-details}.

We use a simple teacher policy in the supervised learning setting, which takes the same inputs and instruction set.
The teacher first checks whether the smallest value is put in the lowest index position, and if not swaps the value into that position, otherwise checks the second smallest value and so on.

For training, we sample $n$ uniformly from the range $10\le n \le 20$, and generate problem instances by uniformly perturbing the list $[0, 1, ..., n-1]$, and always set $\low=0$ and $\high=n-1$.

\tabref{tab:compare-views-results} (first row) shows the results in this setting, where we evaluate the best model on 100 random instances at each size.  We report the percentage of instances correctly solved, within a maximum of $n^2$ steps for each instance size.
Even though the learned model is generalizing in-distribution due in part to the strong generalization capabilities of GNNs, it fails to generalize when out-of-distribution to instances smaller or larger than the training instances.

\begin{table}[t]
    \centering
    \caption{Percentage of instances solved. %
    The cells {\setlength{\fboxsep}{0pt}\colorbox{gray!50}{highlighted in gray}} indicate the training size range.
    }
    \vspace{-0.5em}
    {\renewcommand{\arraystretch}{0.9}
    {\small
    \begin{tabular}{rccccccc|c}
    \toprule
        Instance size & 5 & 10 & 20 & 30 & 40 & 50 & 1000 & Complexity \\
    \midrule
        Full view & 72 & \graycell \textbf{100} & \graycell \textbf{100} & 33 & 0 & 0 & 0 & $O(n^3)$ \\
        Interface view & \textbf{100} & \graycell \textbf{100} & \graycell \textbf{100} & \textbf{100} & \textbf{100} & \textbf{100} & \textbf{100} & $O(n^2)$ \\
        \bottomrule
    \end{tabular}
    \label{tab:compare-views-results}
    }}
    \vspace{-1em}
\end{table}

\subsubsection{Constant size input and better instruction set lead to strong generalization}
\label{sec:bubble-insertion}

It is clear from the previous results that generalization across instance sizes is challenging, as also reported in many previous works \cite{reed2015neural,vinyals2015order,pierrot2019learning}.  %

Notably, none of the popular sorting algorithms decide which elements to swap by looking at the whole input at each execution step.  On the contrary, the decisions are typically made based on local evidence only. For example, in bubble sort (see \algref{alg:bubblesort} in \appendixref{appendix:scripted-sorting-algorithms}), we scan the input using an index variable $i$ and only compare $\A[i]$ with $\A[i+1]$ at each step to see if they can be swapped, without accessing other elements in the input.  Such locality brings two benefits: (1) the cost for each step becomes constant, rather than scaling with $n$ and (2) the algorithm becomes agnostic to the input size.
We implement this principle for the sorting task by employing a small set of $k$ (independent of $n$) index variables, and include in the input the information about how $\A[i]$ compares with $\A[i+1]$ and $\A[i-1]$ for each index variable $i$.
Furthermore, we also let the model know for each pair of variables $i,j$ how $i$ compares to $j$ and $\A[i]$ compares to $\A[j]$. %
We call this view of the data the \emph{interface view}, and is described in more detail in \appendixref{appendix:input-details} and illustrated in \figref{fig:data-views}.
We take $k=4$, with 4 index variables $v_1,v_2,v_3,v_4$ initialized as $v_1=v_3=\low$ and $v_2=v_4=\high$.

We also change and expand the instruction set, to 3 instruction types: 
SwapWithNext($i$) which swaps $\A[v_i]$ and $\A[v_i+1]$; MoveVar($i$, +1/-1) which assigns $v_i\gets \min\{v_i+1,\high\}$ or $\max\{v_i-1, \low\}$; and AssignVar($i,j$) which assigns $v_i\gets v_j$.
The arguments $i,j\in\{1,...,k\}$.  The MoveVar and AssignVar instructions can be found in \eg~the x86 instruction set, as \texttt{INC}/\texttt{DEC} and \texttt{MOV} instructions.

\textbf{Remark. } Such an input / output interface is sufficient to implement bubble and insertion sort.  However, the task of finding good algorithms within this space specified by the interface is still challenging and interesting, as (1) the space of possible programs is still huge, estimated as $28^{3.48\times 10^{10}}$ (see \appendixref{appendix:search-space-size}), such that brute force enumeration or search is hopeless; and (2) as will be shown later, this space contains well known algorithms, like bubble sort and insertion sort, but also algorithms that are even better which can be discovered by our models through learning (\secref{sec:surpass-teacher}).

In this setup, each step takes constant time, as the input has a fixed size. Therefore the time complexity of algorithms in this family is proportional to the number of steps required to solve an instance. 
We use MLPs to model $p_\theta(a|s)$.
The output has an extra distribution over instruction types $p(a_\text{type}|s_t)$, and then separate models for each type, again auto-regressive if there are more than one arguments.

\textbf{Strong generalization. } 
The second row in \tabref{tab:compare-views-results} shows the supervised learning results in this setting.
It is notable that the neural controller learned in this setting generalizes far beyond the training size range,
even on instances 100x larger than the training instances, requiring orders of magnitude more steps to solve, without a single failure case, indicating strong generalization. %
This difference in generalization performance from the full view results as shown in \tabref{tab:compare-views-results} signifies the impact of the changes in the input representation and the instruction set.

\subsection{Improving Efficiency}

\subsubsection{Functions enable divide-and-conquer}
\label{sec:functions}

The previous section shows that carefully designed input / output interfaces enable strong generalization.
However, from an efficiency perspective the algorithms that can be learned with such interfaces, at least for sorting, are still relatively slow $O(n^2)$ algorithms at best (see \appendixref{appendix:limit-of-bubble-insertion-interface} for a proof).

In this section, we further expand our instruction set by adding functions, a key ingredient for more efficient algorithms.
In computer architectures, a typical instruction set contains not only data movement / arithmetic instructions, but also control logic instructions, \eg~jump into a subroutine and return.  In programming languages, functions encourage reusability and modularity, and
enable algorithm design paradigms like divide-and-conquer,
the strategy behind many efficient algorithms, including notably the $O(n\log n)$ sorting algorithms like quick sort and merge sort.

We introduce two extra types of instructions, ``FunctionCall($id, l_1, ..., l_p, o_1, ..., o_p, r_1, ..., r_q$)'' and ``Return($l_1', ..., l_q'$)''.
These are generic instructions not tied
to a particular task.
The arguments
\begingroup %
\setlength{\columnsep}{8pt}
\setlength{\intextsep}{0pt}
\begin{wrapfigure}{r}{0.22\linewidth}
\lstdefinestyle{mystyle}{
    backgroundcolor=\color{gray!20},   
    keywordstyle=\color{blue},
    stringstyle=\color{codepurple},
    basicstyle=\ttfamily\footnotesize,
    keepspaces=true,                 
    showspaces=false,                
    showstringspaces=false,
    showtabs=false,                  
    tabsize=2
}
\lstset{style=mystyle}
\vspace{-6pt}
\begin{lstlisting}[language=Python]
def func(a, b):
  c = ...
  return c
x = func(y, z)
\end{lstlisting}
\vspace{-5pt}
\end{wrapfigure}
for these instructions specify which function to call and which values to be passed or returned, see \appendixref{appendix:func-call} for more details.
The example function call \texttt{x = func(y, z)} shown on the right can be implemented with FunctionCall(\texttt{func}, \texttt{a}, \texttt{b}, \texttt{y}, \texttt{z}, \texttt{x}), and \texttt{return c} with Return(\texttt{c}). %

\endgroup

The external (w.r.t. the neural controller) environment handles the semantics of function calls, %
by keeping track of a call stack.  The input state $s_t$ is also augmented, with the current function ID so the agent knows which function it is in.
Note that 
the controller still needs to figure out how to condition the policy based on the current function ID, or equivalently what to do inside each function. %

For the sorting task, we use 2 functions and %
also add an extra Swap($i,j$) instruction to allow swapping $\A[v_i]$ with $\A[v_j]$.  The input state $s_t$ contains all the comparison information described in \secref{sec:bubble-insertion}, the current function ID, and additionally the encoding of the previous action.
We can implement quick sort in this interface (see \appendixref{appendix:scripted-sorting-algorithms}).

In supervised learning, our neural controller can imitate quick sort perfectly, achieving the same performance as the teacher across all instance sizes, therefore capable of learning $O(n\log n)$ algorithms.

\subsubsection{Surpassing the teacher with RL}
\label{sec:surpass-teacher}

\begin{table}[t]
    \centering
    \caption{Average episode length for models learned using the bubble / insertion sort interface (top half), and using the extended quick sort interface (bottom half).  Note that \textbf{all models compared here achieve 
    100\% solve rate across all instance sizes}.
    *Evaluated with $10n^2$ steps max, instead of $n^2$, as not all episodes finish in $n^2$ steps.  Cells labeled as ``-'' didn't finish evaluation within 24 hours.
    }
    \vspace{-0.5em}
    \begingroup
    \setlength{\tabcolsep}{3pt}
    {\renewcommand{\arraystretch}{0.9}
    {\small
    \begin{tabular}{rcccccccccc}
    \toprule
    Instance size & 5 & 10 & 20 & 30 & 50 & 100 & 200 & 500 & 1000 & 10000 \\
    \midrule
    Bubble sort & 13.5 & \graycell 68.0 & \graycell 293.6 & 677.8 & 1,874.7 & 7,527.0 & 30,046.8 & 187,506.2 & 750,519.3 & - \\
    Insertion sort & 13.7 & \graycell 53.4 & \graycell 208.6 & 475.3 & 1,275.7 & 5,077.5 & 20,134.3 & 125,136.8 & 501,051.7 & - \\
    RL from scratch & 8.7 & \graycell 49.7 & \graycell 252.0 & 617.5 & 1,763.7 & 7,320.5 & 29,623.0 & 186,305.1 & 748,784.5 & - \\
    RL + imitation & \textbf{8.2} & \graycell\textbf{44.3} & \graycell\textbf{190.7} & \textbf{446.6} & \textbf{1,228.5} & \textbf{4,981.2} & \textbf{19,939.2} & \textbf{124,643.0} & \textbf{500,057.0} & - \\
    \midrule
    Quick sort & 
    27.4* & 87.5* & 241.8 & \graycell 399.9 & \graycell 788.8 & 1,840.2 & 4,217.1 & 12,519.4 & 27,633.5 & 368,338.6 \\
    RL from scratch & \bf 13.5* & \bf 56.0* & \bf 220.6 & \graycell 495.5 & \graycell 1,315.6 & 5,166.5 & 20,321.4 & 125,622.0 & 502,035.5 & - \\
    RL + imitation & 24.8* & 79.9* & 223.7 & \graycell \bf 377.2 & \graycell \bf 728.3 & \bf 1,717.4 & \bf 3,914.4 & \bf 11,578.2 & \bf 25,635.0 & \bf 338,947.9 \\
    \bottomrule
    \end{tabular}
    }}
    \endgroup
    \label{tab:n2-and-nlogn-results}
    \vspace{-0em}
\end{table}

With RL the neural controller has the potential to surpass the teacher by further optimizing a performance metric.
However, exploration is a hard challenge for RL, in particular with our large structured action space, long episodes and sparse reward.
We explored adding an imitation loss and a form of reward shaping \cite{ng1999policy} to alleviate this issue.  For the sorting task, the shaping reward we used rewards each swap that increases the ``orderdness'' of the array (see \appendixref{appendix:shaping-reward}).

Our best models are trained without reward shaping, through RL + imitation alone (\secref{sec:model-rl}). We do not initialize the model with the one from supervised learning.
However, reward shaping is needed when not using an imitation loss.  We observed that reward shaping helps learning the \emph{correct} behavior much faster than with the sparse reward $r_t=-c$.  But it is also easier for the model to get trapped in a local optimum in terms of \emph{efficiency}.

In \tabref{tab:n2-and-nlogn-results}, we present RL results with and without imitation under the two interfaces introduced in \secref{sec:bubble-insertion} and \secref{sec:functions}.
The results come from the best models as evaluated on the training range from a sweep over hyperparameters
(\appendixref{appendix:model-details}).
To learn quick sort, we have to train on larger instances, in our case $30\le n\le 50$, as simple $O(n^2)$ algorithms are more efficient on small instances. \appendixref{appendix:more-results} contains more experiment results as well as a few typical training curves.

\textbf{Surpassing the teacher. } %
We can see that RL + imitation can outperform the teacher policy (insertion sort for the first setting, and quick sort for the second setting) across all instance size ranges, while maintaining 100\% solve rate on all instances tested, indicating the discovery of new algorithms.  We include videos in the supplementary material showing the execution traces of the learned algorithms compared with the teachers, and observe qualitatively different behaviors.

\textbf{Learning from scratch is possible. } The RL from scratch without imitation results in \tabref{tab:n2-and-nlogn-results} shows that pure RL is also possible.  Notably the learned models also achieve 100\% solve rate, even though the efficiency results are worse than RL + imitation.
A nice property about this setup is that no trace-level supervision is needed, therefore it can be used in a domain where no prior solutions exist.

\subsection{Generality}

\begin{table}[t]
    \centering
    \caption{Results for searching in ordered lists, reporting average episode length, evaluated on
    100 instances for each instance size.  All approaches achieve 100\% solve rate.
    Both learned agents outperform hand-coded algorithms in the training range of instances ({\setlength{\fboxsep}{0pt}\colorbox{gray!50}{high-lighted in gray}}) on average.}
    \vspace{-0.5em}
    {\renewcommand{\arraystretch}{0.9}
    {\small
    \begin{tabular}{rccccccccc}
    \toprule
        Instance Size & 5 & 10 & 20 & 30 & 50 & 100 & 200 & 500 & 1000  \\
        \midrule
        Linear Search & 2.2 & 4.6 & 9.7 & \graycell 14.2 & \graycell 23.3 & 50.0 & 106.7 & 220.7 & 446.6 \\
        Binary Search & 2.6 & 3.9 & 6.3 & \graycell 7.5 & \graycell 9.2 & 11.7 & \bf 14.6 & \bf 19.1 & \bf 21.8 \\
        \midrule
        RL from scratch & \bf 1.3 & \bf 2.5 & 4.3 & \graycell 5.9 & \graycell 9.9 & 15.5 & 29.7 & 85.7 & 168.6 \\
        RL + imitation & 1.4 & 2.6 & \bf 4.1 & \graycell \bf 4.7 & \graycell \bf 7.0 & \bf 10.6 & 18.4 & 41.5 & 85.8 \\ 
        \bottomrule
    \end{tabular}
    }}
    \label{tab:listsearch}
\end{table}

\begin{table}[t]
    \centering
    \caption{Results for the 0/1 Knapsack problem, measured for different instance sizes and different step budgets.
    Reporting the average reward achieved over 100 instances, higher is better.
    Overall, our learned neural programs with 5x less budget perform as well as the DFS strategy.}
    \vspace{-0.5em}
    \begingroup
    \setlength{\tabcolsep}{4pt}
    {\renewcommand{\arraystretch}{0.9}
    {\small
    \begin{tabular}{r|r|ccccccccccc}
    \toprule
        Budget & Size & 2 & 4 & 6 & 8 & 10 & 20 & 40 & 80 & 160 & 320 & 640 \\
        \midrule
        \multirow{2}{*}{20x size}& DFS & 0.45 & \graycell 1.25 & \graycell 1.82 & \graycell 2.47 & 3.14 & 5.94 & 11.10 & 21.67 & 42.11 & 82.01 & 163.42 \\
        & RL & 0.45 & \graycell \bf 1.26 & \graycell \bf 2.03 & \graycell \bf 2.83 & \bf 3.51 & \bf 6.51 & \bf 11.78 & \bf 22.40 & \bf 43.27 & \bf 83.07 & \bf 164.56 \\
        \midrule
        \multirow{2}{*}{100x size}& DFS & 0.45 & \graycell 1.26 & \graycell \bf 2.04 & \graycell 2.81 & 3.43 & 6.43 & 11.75 & 22.39 & 42.95 & 82.82 & 164.22 \\
        & RL & 0.45 & \graycell 1.26 & \graycell 2.03 & \graycell \bf 2.86 & \bf 3.70 & \bf 6.98 & \bf 12.41 & \bf 23.15 & \bf 44.16 & \bf 84.05 & \bf 165.60 \\
        \bottomrule
    \end{tabular}
    }}
    \endgroup
    \label{tab:knapsack}
\end{table}

The principles introduced in the previous sections are not sorting specific, and can be applied to many more algorithmic tasks.  To demonstrate this, in this section we introduce two additional tasks, searching in ordered lists, and the NP-complete 0/1/ knapsack problem.

\textbf{Identical input interface and similar instruction set applied on a different task: searching in ordered lists. }  This task involves searching for a query element $q$, in a sorted list $\A[0]\le ... \le \A[n-1]$.  A correct algorithm would return an index $i$ such that $\A[i]=q$, or report element not found if none of the elements in the list equals $q$.  A linear scan from left to right can solve this task in $O(n)$ time, but the well known binary search algorithm can reduce the number of steps needed to $O(\log n)$.

For this task we reuse the comparison-based input interface with index variables used in sorting, and also reuse the instructions MoveVar and AssignVar.  Additionally, we add: (1) AssignMid($i, j, k$) instruction that assigns $v_i\gets \lfloor (v_j+v_k)/2\rfloor$, (2) Found($i$) instruction that reports element $q$ found at index $v_i$, and (3) NotFound() instruction that reports $q$ not found in \A.  The Found and NotFound instructions terminate the episode.  This setup specifies a family of controllers that includes both the $O(n)$ linear scan algorithm as well as the $O(\log n)$ binary search algorithm.

Since imitation learning trivially learns to imitate the teachers, in \tabref{tab:listsearch} we compare the models learned through RL against the hand-coded teachers.  In the RL + imitation setting we used binary search as the teacher.  The reward for this task is $r_t=-c$, and a large negative reward is applied if an episode terminates with a wrong result, \ie~reporting Found or NotFound incorrectly.  We can again see that (1) the learned models achieve perfect generalization,
and (2) RL can help learning models that surpass the teacher,
and RL + imitation performs even better.  Notably, near the training range, our learned models exhibit a time complexity closer to $O(\log n)$ which becomes linear when far out of training range, indicating some level of specialization.

\textbf{Different inputs and different instruction set: 0/1 knapsack problem. }  This task is a classic NP-complete problem.  In each problem instance, there are $n$ items each with weight $w_i$ and value $v_i$, and we have to pick items to put into a knapsack with maximum weight capacity of $W$ such that the items in the knapsack has maximum value.  We consider real valued $w_i$ and $v_i$, both sampled from uniform distribution $U[0,1]$, and set $W=\frac{1}{2}\sum_i w_i$.  Since solving each instance exactly requires exponential time, we consider instead maximizing value within a fixed step budget.

For this task we use the simple depth-first search (DFS) as a teacher policy, which exhaustively enumerates all the $2^n$ possibilities.  We design an environment that keeps track of the current item index $i$, the current list of items in the knapsack, their total weight and value, and the best value achieved so far.  We use a set of instructions including: Put() which puts the current item $i$ into the knapsack, Pop() which pops the item $i$ from the knapsack, MoveVar(+1/-1) that assigns $i\gets i+1$ or $i-1$, Knapsack() which is a function call that calls the knapsack function recursively, and Return() that returns from the current level of Knapsack call.  Such an interface defines a family of controllers that includes the simple DFS algorithm as a special case.

\tabref{tab:knapsack} compares the model learned with RL against DFS.  We use $r_t=($best value after $a_t) -($best value before $a_t)$,
such that the total reward of an episode is the value of the best solution found.  The models are trained on instances of size 4-8, with a maximum step budget of 200.  In this case, RL from scratch (reported in the table) finds a better solution than RL + imitation, and the learned model can find solutions as good as DFS but with 5x less step budget, even well beyond the training range.

\section{Limitations and future work}
In this paper we studied the design choices that enable learning strongly generalizable and efficient neural program induction models that follows the $a=f(s)$ framework.  These proposals, albeit successful, still have limitations that provide valuable directions for future work.

Notably, computing $a=f(s)$ can be more expensive on current CPUs than executing typical computation employed in the algorithms studied here. We thus hope that this research will motivate future CPUs to have ``Neural Logic Units'' to implement such functions $f$ fast and efficiently, effectively extending their instruction set, and making such approaches feasible.

Also, in our framework, a user still needs to design an input / output interface and ideally provide a solution to employ the RL + imitation setting so that our approach or future variants can improve upon the teacher. Possible solutions include designing a ``universal'' instruction set that covers a wide range of tasks, similar to how instruction sets have been designed for CPUs. Further improvements in RL could also obviate the need for providing an initial solution to bootstrap learning %
and optimization. %

Another area that our approach can potentially shine is to automatically adapt an algorithm or neural program to a new data distribution through learning without manual tuning, which presents new challenges but also exciting opportunities for the research community.

\section*{Broader Impact}
The idea and approaches studied in this paper could have broad impact over how we optimize algorithms and programs.  Developed further, this research could help us find more efficient algorithms for solving a variety of challenges that have practical value.  For example, finding a new sorting algorithm that is more efficient than popular standard library tools can create great value (even a small percentage improvement can have great impact) as the new algorithm could be deployed everywhere easily. Another example is finding a new and more efficient algorithm for solving NP-hard problems, for example traveling salesperson problem or mixed integer programming, which can directly benefit transport or logistics industry.  On the other side, there is potential for technologies like this to be misused, for example optimizing an algorithm for breaking a security system.  We believe there is still a long way to go before the technology is mature enough to be widely used, but understanding and evaluating such risks is paramount to our research.

\section*{Acknowledgements}
The authors would like to thank Nando de Freitas, Xujie Si, Alex Gaunt, Scott Reed and Feryal Behbahani, Yuhuai Wu and many others at DeepMind for their helpful discussions and comments.

\bibliography{refs}
\bibliographystyle{plain}

\clearpage

\appendix

\part{Appendix}
\parttoc

\section{Detailed description of input feature representations}
\label{appendix:input-details}

In this section we describe the detailed input representations for each of the tasks and interfaces we consider.  As discussed in the main paper, what to put into the input state $s_t$ makes a big difference in a model's performance, affecting both correctness and efficiency.

\subsection{Sorting}

\subsubsection{Full view of data (\secref{sec:model-full-view})}

In this setting, input $s_t$ is a graph $G=(V, E)$, where we have one node for each element $\A[i]$ in the range to be sorted, where $\low \le i \le \high$, and a directed edge for each pair of indices $(i, j), \forall \low \le i, j\le \high, i\ne j$.  The nodes and edges are both attributed with a feature vector.

For each edge we have a 2-dimensional feature vector
\begin{equation}
[\text{sign}(i - j), \text{sign}(\A[i] - \A[j])],
\end{equation}
where
\begin{equation}
    \text{sign}(x) = \left\{\begin{array}{rl}
    1, & \text{if } x > 0 \\
    0, & \text{if } x = 0 \\
    -1, & \text{if } x < 0
    \end{array}\right..
\end{equation}
These edge features contain all the information about the data.  The node features are chosen to be non-informative, and we used a single constant value of 1 (1-dimensional feature) as node features.  The node features have a shape of $|V|\times 1$, and the edge features have a shape of $|E|\times 2$.

\subsubsection{Bubble / insertion sort interface (\secref{sec:bubble-insertion})}
\label{appendix:bubble-insertion-interface-details}

In this setting, the input $s_t$ only contains a partial view of the data, \ie~information about $\A$, as well as some execution states, \eg~information about where the pointers are.

As described in the main paper, in this setting the environment keeps track of $k$ index variables $v_1, ..., v_k$, and the state $s_t$ is a vector concatenation of all the following:
\begin{itemize}
    \item For each $1\le i,j\le k$ and $i < j$, information about how $v_i$ compares with $v_j$ and how $\A[v_i]$ compares with $\A[v_j]$ (6-dimensional vector):
    \begin{equation*}
        [\I[v_i < v_j], \I[v_i = v_j], \I[v_i > v_j], \I[\A[v_i] < \A[v_j]], \I[\A[v_i] = \A[v_j]], \I[\A[v_i] > \A[v_j]]],
    \end{equation*}
    where $\I[.]$ is the indicator function.
    \item For each $1\le i\le k$, information about how $\A[v_i]$ compares with $\A[v_i-1]$, represented as a 4-dimensional 1-hot vector. The table below lists each of the 4 cases (corresponding to 4 dimensions of this vector):
    
    \begin{tabular}{c|c}
        if $v_i - 1 < \low$ & else  \\
        \midrule
        1 & 0 \\
        0 & $\I[\A[v_i] > \A[v_i - 1]]$ \\
        0 & $\I[\A[v_i] = \A[v_i - 1]]$ \\
        0 & $\I[\A[v_i] < \A[v_i - 1]]$
    \end{tabular}
    
    and how $\A[v_i]$ compares with $\A[v_i+1]$ (the table lists each of the 4 dimensions of this vector):
    
    \begin{tabular}{c|c}
        if $v_i + 1 > \high$ & else  \\
        \midrule
        0 & $\I[\A[v_i] > \A[v_i + 1]]$ \\
        0 & $\I[\A[v_i] = \A[v_i + 1]]$ \\
        0 & $\I[\A[v_i] < \A[v_i + 1]]$ \\
        1 & 0
    \end{tabular}
\end{itemize}
For our setting with $k=4$, the input $s_t$ is a vector of size $6k(k-1)/2 + 4k + 4k = 68$.

\subsubsection{Quick sort interface (\secref{sec:functions})}

The input $s_t$ for this extended interface inherits all of the inputs from the bubble / insertion interface, with two additional parts: (1) the encoding of the current function ID; and (2) the encoding of the previous action.

The encoding of the current function ID is critical for the model to condition its behavior on.  The model does different things conditioned on different function IDs, therefore implementing different functions.  The current function ID is encoded as a one-hot vector of size $F+1$ which is the number of allowed functions in the environment $F$, plus 1 for the out-most scope which is not in any function.

The encoding of the previous action is useful for disambiguating different states.  Note that since the input $s_t$ contains only a partial view of the data and execution state information, many different data instances and execution states would be mapped to the same input state $s_t$.  This is an important design decision that affects the generalization and efficiency performance of our model.  On the other hand, over-restricting the input $s_t$ may cause too many states to be mixed into one, which may restrict the class of algorithms that could be learned.  In this setting, we found that we could add the encoding of the previous action to disambiguate all the necessary situations to properly implement quick sort in our framework.

The vector encoding of an action has a few parts: (1) encoding of the action type, we use a one-hot encoding for this; and (2) encoding of the action arguments.  We have one set of action argument encodings for each possible action type, and fill in 0s when the action type is not the actual action type.  For each action argument:
\begin{itemize}
    \item If the type is boolean, we encode it as 1 if it is True and 0 otherwise.
    \item If the type is integer or pointer, we encode it as a one-hot vector with size the same as the number of possible values.
\end{itemize}

The encoding size for each action type in our sorting setting ($k=4, F=2$) is listed in \tabref{tab:encoding-size-quick-sort}.
\begin{table}[h]
\centering
\caption{Encoding size for the arguments in each action type in the quick sort interface.}
\label{tab:encoding-size-quick-sort}
\begin{tabular}{c|c}
    \toprule
    Action Type & Encoding Size \\
    \midrule
    SwapWithNext($i$) & $4$ \\
    MoveVar($i$, +1/-1) & $4 + 1$ \\
    AssignVar($i, j$) & $4 + 4$ \\
    FunctionCall($id, l_1, l_2, o_1, o_2, r_1$) & $2 + 4 \times 5$ \\
    Return($l_1'$) & $4$ \\
    Swap($i,j$) & $4 + 4$ \\
    \bottomrule
\end{tabular}
\end{table}

Summing up all the action argument embedding sizes for each action type, we get a total size of $51$.  The total size of the input $s_t$ is therefore $68$ (features from the bubble / insertion interface) $+ 3$ (current function ID one-hot) $+ 6$ (action type one-hot)$ + 51$ (action arguments encoding)$ + 1$ (whether previous action is None)$ = 129$.  Note that we added an extra bit to indicate if the action is None, which is necessary when, \eg~at the start of the episode, or when entering a function.

\subsection{Searching in ordered list}

For this task we use an interface very similar to sorting.  The environment again keeps track of $k$ index variables $v_1, ..., v_k$, and we use the same comparison features used in the bubble / insertion sort interface.  On top of this, we also add two extra parts to the input:
\begin{itemize}
    \item Comparison between the query $q$ and each of $\A[v_i]$, \ie~for each $1\le i\le k$, a vector of size 3:
    \begin{equation*}
        [\I[q < \A[v_i]], \I[q = \A[v_i]], \I[q > \A[v_i]]].
    \end{equation*}
    \item Encoding of the previous action, with the action types and encoding sizes for this environment listed in \tabref{tab:actions-and-encoding-size-list-search}.
\end{itemize}

\begin{table}[h]
    \centering
    \caption{Action types and action argument encoding sizes for the searching in ordered lists task.}
    \label{tab:actions-and-encoding-size-list-search}
    \begin{tabular}{c|c}
    \toprule
        Action Type & Encoding Size \\
        \midrule
        MoveVar($i$, +1/-1) & $4 + 1$ \\
        AssignVar($i, j$) & $4 + 4$ \\
        AssignMid($i, j, k$) & $4\times 3$ \\
        Found($i$) & $4$ \\
        NotFound() & 0 \\
    \bottomrule
    \end{tabular}
\end{table}

The total size of the input $s_t$ is therefore $68$ (bubble / insertion interface) $+3\times 4$ (comparison with query)$+5$ (action type encoding) $+(4+1)+(4+4)+4\times 3 + 4$ (action argument encoding)$+1$ (whether previous action is None) $=115$.

\subsection{0/1 knapsack problem}

For this task, the interface is designed so that the basic depth-first search policy can be implemented.  The environment keeps track of a single index variable $i$, and the input $s_t$ contains:
\begin{itemize}
    \item The following list of features summarizing the search progress:
    
    \begin{tabular}{c|p{0.43\columnwidth}}
    \toprule
    Feature & Note \\
    \midrule
    sign($i$) & If $i$ is beyond 0 \\
    sign($i-n$) & If $i$ is beyond $n$ (number of items) \\
    $\I[i\in sol]$ & If item $i$ is in the current solution \\
    $\I[w \le W]$ & If the current weight is smaller than capacity \\
    $\I[v + v_\text{rest} > v^*]$ & If the current solution has potential to do better than the best solution \\
    $\I[w + \sum w_\text{rest} \le W]$ & If current weight is light enough to fit all the rest of the items \\
    $\I[w + w_\text{min} \le W]$ & If at least one more item can fit into the current solution \\
    \bottomrule
    \end{tabular}
    
    where $w$ is the weight of the current knapsack, $v$ is the value of the current solution, \ie~sum of the values for all the items already in the knapsack, $v^*$ is the value of the best solution found so far in the episode, and we define
    \begin{align*}
        v_\text{rest} &= \left\{\begin{array}{ll}
            \sum_{j=i+1}^n v_j & \text{if $i\in sol$} \\
            \sum_{j=i}^n v_j & \text{if $i\notin sol$}
        \end{array}\right. \\
        w_\text{rest} &= \left\{\begin{array}{ll}
            \sum_{j=i+1}^n w_j & \text{if $i\in sol$} \\
            \sum_{j=i}^n w_j & \text{if $i\notin sol$}
        \end{array}\right. \\
        w_\text{min} &= \left\{\begin{array}{ll}
            \min\{w_{i+1}, ..., w_n\} & \text{if $i\in sol$} \\
            \min\{w_{i}, ..., w_n\} & \text{if $i\notin sol$}
        \end{array}\right.
    \end{align*}
    to be the value of the rest of the items, weight of the rest of the items, and minimum weight for each of the rest of the items.
    
    \item Encoding of the previous action, with the action types and encoding sizes listed in \tabref{tab:actions-and-encoding-size-knapsack}.
\end{itemize}

\begin{table}[h]
    \centering
    \caption{Action types and action argument encoding sizes for the 0/1 knapsack problem.}
    \label{tab:actions-and-encoding-size-knapsack}
    \begin{tabular}{c|c}
    \toprule
        Action Type & Encoding Size \\
        \midrule
        Put() & 0 \\
        Pop() & 0 \\
        MoveVar(+1/-1) & 1 \\
        Knapsack() & 0 \\
        Return() & 0 \\
        \bottomrule
    \end{tabular}
\end{table}

The total size of the input $s_t$ is therefore $7$ (custom features) $+5$ (action type one-hot) $+1$ (action argument encoding) $+1$ (whether previous action is None)$=14$.

Note that, like most other tasks, the provided input interface is sufficient for implementing the motivating algorithm, in this case the DFS algorithm, but also contains more features such that the model is free to use them if they are helpful, enabling them to learn different and potentially better algorithms.

For this task the features that involve $v_\text{rest}, w_\text{rest}, w_\text{min}$ require linear time to compute.  However since the knapsack problem is NP-complete, anything less than exponential time is acceptable.

\section{Program search space size estimation}
\label{appendix:search-space-size}

Each program in our setup corresponds to a mapping $f$ that maps a state $s$ to an output instruction $a$. The number of unique programs in our search space is therefore $|\mathcal{A}|^{|\mathcal{S}|}$, where $|\mathcal{A}|$ is the number of possible instructions in the output space, and $|\mathcal{S}|$ is the size of the state space.  A program needs to specify what instruction to output for each possible state, and two programs are different even if they differ on just one state.

\textbf{Bubble / insertion interface. } Here we try to estimate the search space size for programs in the bubble / insertion sort interface.  In this case we have $k$ different SwapWithNext($i$) instructions, $2k$ different MoveVar($i$, -1/+1) instructions, and $k^2$ different AssignVar($i, j$) instructions.  Therefore the total number of unique instructions is $|\mathcal{A}|=k+2k+k^2=28$ as $k=4$ in our case.

As calculated in \appendixref{appendix:bubble-insertion-interface-details}, the input is represented by a 68-dimensional binary vector.  A naive estimate of the state space size would be $|\mathcal{S}|=2^{68}\approx 2.95\times 10^{20}$.  However, since the bits in this 68-dimensional vector are not all independent, and the actual state space size can be estimated more accurately.

In particular, the first part of $s_t$ contains the comparison results for each pair of $1\le i, j \le k, i\ne j$, involving ${k\choose 2} = k(k-1)/2$ pairs, and for each pair we compare $i$ with $j$ but also $\A[i]$ with $\A[j]$, each with 3 independent possibilities, this adds up to $(3\times 3)^{k(k-1)/2}$ different possibilities.

The next part of $s_t$ contains information about how $\A[v_i]$ compares with $\A[v_i-1]$ and $\A[v_i+1]$ for each $i$, each with $4\times4$ possibilities, which adds up to $(4\times 4)^k$ possibilities total.

Putting these two parts together, we have a better estimate of the state space size as
\begin{equation*}
    |\mathcal{S}|=(3\times 3)^{k(k-1)/2} \times (4\times 4)^k = 3.48\times 10^{10}.
\end{equation*}

This leads to a search space size of $28^{3.48\times 10^{10}}$.  This number (multiplying $3.48\times 10^{10}$ copies of the number $28$ together) is so huge that a naive enumeration of the search space is infeasible.  This also shows how challenging finding a correct program in this space is.

\textbf{Quick sort interface. } With the quick sort interface, the search space for programs is even larger.  First of all, the size of the input $s_t$ grows from 68-dimensional to 129-dimensional.  Second of all, the output space size grows from 28 to $k+2k+k^2+2k^5+k+k^2=2096$.  The biggest factor of $2k^5$ comes from the function call instruction, with $id\in\{1,2\}$ and 5 variable IDs each with $k$ possibilities.

A naive calculation of the search space leads to an estimate of $2096^{2^{129}}$.  However we can refine the estimate of the state space size similar to what we did with the bubble / insertion interface.  But it is easy to see this search space is much larger than the bubble / insertion search space, hence the problem is much harder to solve.

\section{Theoretical performance limit of the bubble / insertion interface}
\label{appendix:limit-of-bubble-insertion-interface}

In this section we show that the optimal performance achievable with the bubble / insertion interface is $O(n^2)$, making it clear that even though all the 3 interfaces we discussed for sorting can achieve correctness, the specific interface and instruction set still plays a significant role in efficiency leading to $O(n^3), O(n^2)$ and $O(n\log n)$ algorithms respectively.

The theoretical result states the complexity of an algorithm class for a particular type of input data distributions.  In our case the results are w.r.t. the uniformly permuted lists, \ie, start with a list $\A=[0, 1, ..., n-1]$ and then uniformly perturb it, such that every possible permutation of $\A$ has the same probability.  This can be implemented, for example, by iterating over index $i$ from $0$ to $n-1$ and swapping $\A[i]$ with $\A[j]$ where $j$ is uniformly sampled from the range $[i, i+1, ..., n-1]$. Note the actual numeric values in $\A$ doesn't matter for our complexity results.  If these values are arbitrary, we assume a way to break ties if there are duplicated values, such that the elements in $\A$ can always be ordered unambiguously.

Under this uniform distribution, we can easily show the following:
\begin{equation}
    p(\A[i] < \A[j]) = p(\A[i] > \A[j]) = \frac{1}{2}, \qquad \forall i, j\qquad 0\le i < j < n.
\end{equation}
This is true because this uniform distribution assigns equal probability for each permutation, and we have equal number of permutations where $\A[i] < \A[j]$ vs. $\A[i] > \A[j]$.

We can now state the following theorem:

\begin{theorem}
\label{thm:bubble-insertion}
For uniformly permuted lists of size $n$, an agent restricted to only use SwapWithNext instruction to manipulate data needs on average $\Theta(n^2)$ SwapWithNext instructions to sort the list correctly.
\end{theorem}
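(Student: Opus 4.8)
The plan is to reduce everything to counting \emph{inversions}. Define an inversion of the array as a pair of positions $(i,j)$ with $i<j$ but $\A[i]>\A[j]$, and let $\mathrm{inv}(\A)$ be the total number of such pairs. The crux of the argument is an adjacency lemma: a single SwapWithNext, which exchanges two neighbouring entries $\A[v]$ and $\A[v+1]$, changes $\mathrm{inv}(\A)$ by exactly $\pm 1$. This is because the swap leaves the relative order of every other pair of positions unchanged and only flips the order of the pair $(v,v+1)$ itself, decreasing the count by one if that pair was out of order and increasing it by one otherwise. I would establish this lemma first, since the remaining steps are short consequences of it.

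Given the adjacency lemma, the lower bound is immediate. A correctly sorted list has $\mathrm{inv}=0$, so an agent that touches the data only through SwapWithNext must drive the inversion count from its initial value down to $0$; since each SwapWithNext decreases it by at most one, the number of such instructions is at least $\mathrm{inv}(\A)$ of the input. (Pointer-only instructions such as MoveVar and AssignVar leave the data, and hence $\mathrm{inv}$, untouched.) It then remains to compute the expected number of inversions under the uniform-permutation distribution. Using the symmetry fact already recorded in the excerpt, namely $p(\A[i]>\A[j])=\tfrac12$ for every pair $0\le i<j<n$, linearity of expectation gives $\mathbb{E}[\mathrm{inv}(\A)] = \binom{n}{2}\cdot\tfrac12 = \tfrac{n(n-1)}{4} = \Theta(n^2)$, which yields the $\Omega(n^2)$ lower bound on the average number of swaps.

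For the matching upper bound I would exhibit a concrete agent in this interface whose swap count is exactly $\mathrm{inv}(\A)$ on every input. Bubble sort is the natural witness: it is realizable with the available SwapWithNext, MoveVar, and AssignVar instructions, it performs one SwapWithNext precisely for each adjacent out-of-order pair it encounters, and it never executes a swap that would increase the inversion count. By the adjacency lemma its swap count therefore equals $\mathrm{inv}(\A)$ exactly, so its expected swap count is again $\tfrac{n(n-1)}{4}=O(n^2)$. Combining the two directions shows the optimal achievable average is $\Theta(n^2)$.

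The main obstacle, and the only place requiring genuine care, is the adjacency lemma together with the accompanying claim that the minimum number of adjacent transpositions needed to sort a list equals its inversion count; once that is in hand, both the lower bound and the expectation computation are short. One subtlety worth addressing explicitly is the handling of duplicate values: the theorem's setup assumes a fixed tie-breaking rule so that the elements of $\A$ are totally ordered, which makes $\mathrm{inv}$ well defined and guarantees that every SwapWithNext changes it by exactly $\pm 1$ rather than possibly leaving it unchanged.
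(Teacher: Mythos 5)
Your proposal is correct and follows essentially the same route as the paper's proof: define inversions, use $p(\A[i]>\A[j])=\tfrac12$ and linearity of expectation to get $\mathbb{E}[\mathrm{inv}(\A)]=\tfrac{n(n-1)}{4}$, and observe that each SwapWithNext changes the inversion count by at most one, forcing $\Omega(n^2)$ swaps on average. You are in fact slightly more thorough than the paper, which leaves the matching $O(n^2)$ upper bound implicit (bubble sort being realizable in the interface), whereas you state it explicitly via the fact that bubble sort performs exactly $\mathrm{inv}(\A)$ swaps.
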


\begin{proof} We define an inversion as a pair of $(i, j)$ such that $i < j$ but $\A[i] > \A[j]$.  For uniformly permuted lists, we can show that the expected number of inversions is
\begin{equation}
    \expt\left[\sum_{0\le i < j < n} \I[\A[i] > \A[j]]\right] = \sum_{0\le i < j < n} \expt[\I[\A[i] > \A[j]]] = \sum_{0\le i<j<n} \frac{1}{2} = \frac{n(n-1)}{4},
\end{equation}
where the expectation is taken over the data distribution, $\I[.]$ is an indicator function, so that $\expt[\I[\A[i]>\A[j]]] = p(\A[i] > \A[j])=1/2$.

Since each SwapWithNext intruction only swaps two neighboring elements, it can reduce the number of inversions at most by 1.  This immediately implies that we need in expectation $\frac{n(n-1)}{4}=\Theta(n^2)$ SwapWithNext instructions to reduce the number of inversions to 0.
\end{proof}

\textbf{Remark: } The SwapWithNext instruction as described in \secref{sec:bubble-insertion} swaps $\A[i]$ with $\A[i+1]$ at a position $i$ (in \secref{sec:bubble-insertion} this is represented through a variable $v_i$), therefore such an instruction only swaps neighboring elements in the list.  The theorem applies to any agents / algorithms that changes the data in the list only through the SwapWithNext instruction, and does not restrict other types of instructions that do not change the data in the list, for example, the MoveVar and AssignVar instructions in our bubble / insertion interface.  The theorem states that as long as the agent complies with this restriction, no matter what the input observation is, no matter what extra instruction types we have (that does not change the data in the list), we need at least $\Theta(n^2)$ SwapWithNext instructions to sort a uniformly permuted list of size $n$ on average, and potentially more if counting the other types of instructions, therefore the optimal complexity for algorithms in this class is $O(n^2)$ at best.

The fact that this theorem holds regardless of what kind of input observations the agent can see also shows the importance of the instruction set.

\section{Model details}
\label{appendix:model-details}

\subsection{Network architectures}

\subsubsection{Graph Neural Networks (GNNs)}
We use GNNs to handle the graph structured inputs.  Denote the input node feature for node $v\in V$ as $\xv_v$, and edge feature for edge $(u,v)\in E$ as $\xv_{uv}$, then the GNN computes representations for each node through the following message passing process:
\begin{align}
    \hv_v^{(0)} &= \MLP_\text{embed}(\xv_v) \\
    \mv^{(t)}_{u\rightarrow v} &= \MLP_\text{edge}^{(t)}\left(\left[\hv_v^{(t)}, \hv_u^{(t)}, \xv_{uv}\right]\right) \\
    \hv_v^{(t+1)} &= \MLP_\text{node}^{(t)}\left(\left[\hv_v^{(t)}, \frac{1}{|\mathcal{N}(v)|}\sum_{u\in \mathcal{N}(v)} \mv^{(t)}_{u\rightarrow v}\right]\right),
\end{align}
where $\MLP_\text{embed}, \MLP_\text{edge}^{(t)}, \MLP_\text{node}^{(t)}$ are individual MLPs, $[.]$ represent vector concatenation and $\mathcal{N}(v)=\{u|(u,v)\in E\}$ represents the set of incoming neighbors of $v$.

For the full view interface for sorting (\secref{sec:model-full-view}), we used a 5-layer GNN with node state dimension 16 (size of $\hv_v$).  Each of the individual MLPs has 1 hidden layer, with the following layer sizes: $\MLP_\text{embed}$ - [32, 16], $\MLP_\text{edge}^{(t)}$ - [32, 32], $\MLP_\text{node}^{(t)}$ - [32, 16].  Here the first number is the size of the hidden layer, and the second number is the size of the output of the MLP.

The full view interface requires the model to predict two indices.  We use a mechanism similar to the pointer network \cite{vinyals2015pointer} to select nodes from the graph.  More specifically, once we get all the node representations $\hv_v^{(T)}$, we apply another MLP to predict a logit value for each node, then the distribution over indices is a softmax over those logits.  The second index is predicted auto-regressively, and the conditioning on the first index is encoded by appending one extra bit to each node feature vector, indicating which node was selected as the first index.

\subsubsection{MLPs}

For the other tasks we have explored, we use a vectorized representation for $s_t$, and therefore further use an MLP to get representations of $s_t$.  For all the MLPs across all tasks, we use a network with 3 hidden layers with size 64 each.

For outputing actions, we use one MLP for predicting the action type, and one MLP for each action argument, with appropriate output distributions for each type.  More specifically, we use Bernoulli distributions for boolean arguments, and categorical (softmax) distributions for integer arguments.  We do not use the pointer type in this setting.  For each output argument, the state representation is passed through the MLP and then projected to the appropriate output dimension with a linear layer and then apply softmax or sigmoid to get the appropriate probability values.

To make the arguments auto-regressive, we augment the input vector with the encoding of the previously selected action argument, binary for boolean arguments and one-hot for integer arguments and feed the augmented input vector through the MLP to make predictions.

\subsection{Shaping reward}
\label{appendix:shaping-reward}

For the sorting task, in addition to the standard per-step penalty $r_t=-c$, we also explored a form of incremental shaping reward, defined in terms of the orderedness of the array
\begin{equation}
    h(\A) = \sum_{i=\low}^{\high-1} \I[\A[i] \le \A[i+1]],
\end{equation}
where $\I[.]$ is the indicator function.  This quantity counts the number of neighboring pairs in the correct order. We define the incremental reward as
\begin{equation}
    r_t = h(\A_{t+1}) - h(\A_t) - c
\end{equation}
where $h(\A_t)$ is the orderedness of the list $\A$ at time step $t$ and $h(\A_{t+1})$ is the orderedness at the next time step, after executing the action $a_t$.  Note that since each swap action only makes local changes to the array, the difference in orderedness after an action can be computed in constant time.

In our experiments, we found that using this shaping reward can help the learning take off much faster, but the model is also more easily trapped in local optima.  In particular, all of our models that outperform quick sort are not trained with this shaping reward.

Also note that this is just one form of shaping reward, and there are plenty more options for each domain that we haven't fully explored.  Carefully designed reward may play a big role in reinforcement learning.

\subsection{Hyperparameters for learning}

For each task, we run a sweep over hyperparameters for each setting.  The hyperparameters swept over include:
\begin{itemize}
    \item Learning rate $\in\{10^{-4}, 10^{-5}\}$.
    \item Discount $\gamma\in\{0.9, 0.99\}$ for full-view sorting, searching in sorted lists, and knapsack, and $\gamma\in\{0.99, 0.999\}$ for sorting with bubble / insertion and quicksort interfaces.
    \item Weight for the entropy loss $\in\{0, 10^{-3}\}$.
    \item Number of steps $n$ in $n$-step policy gradient: $n\in\{80, 160, 320\}$ for sorting with bubble / insertion or quick sort interface, and $n=50$ for full-view sorting, searching in sorted lists and knapsack.
    \item Weight for the baseline loss $\mu=10^{-3}$.
\end{itemize}

\section{Function call semantics}
\label{appendix:func-call}

Here we give a more detailed description about the semantics of the function call instructions introduced in \secref{sec:functions}.

We introduced two new types of instructions:
\begin{gather*}
    \text{FunctionCall}(id, l_1, ..., l_p, o_1, ..., o_p, r_1, ..., r_q) \\
    \text{Return}(l'_1, ..., l'_q),
\end{gather*}
where $id$ is an integer valued function ID.
Each $l_i$ is an ID of the local (inside the function) variable to be assigned, $o_i$ is an ID of the outer-scope (out of the function, when calling the function) variable to be passed in, $r_i$ is an ID of the outer scope variable to receive the return value, and  $l'_i$ is an ID of the local variable whose value will be returned. $p$ is the number of input arguments this function accepts, and $q$ is the number of return values this function has.  In our sorting example, $p=2$, $q=1$, and each $l_i, o_i, r_i, l'_i\in\{1, ..., k\}$ is a variable ID.

To support these two instructions, the environment maintains a function call stack, and each stack entry keeps track of the necessary information useful for recovering the execution once returned from the current function.
Each stack entry contains the variable values before the function call, as well as the ID of the variables to receive the return values, plus any other state information that need to be kept local.

The environment also keeps track of the previous action being executed.  When entering a new function, the previous action is set to None.  The function call stack entry also keeps track of the previous action before the function call to properly resume execution after return.

\begin{figure}
    \centering
    \includegraphics[width=1.0\columnwidth]{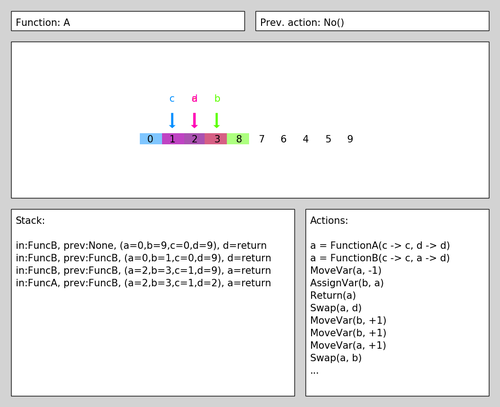}
    \caption{One step in an example episode for sorting with the quick sort interface.}
    \label{fig:quick-sort-interface}
\end{figure}

\figref{fig:quick-sort-interface} shows one step in one example episode for sorting with the quick sort interface that uses function calls, which may help better understand how functions work in our setup.  This visualization shows the array $\A$ in the center, the $k=4$ index variables listed as $a, b, c, d$ arrows (with overlay), the current function ID (top-left corner), the previous action (top-right corner), the call stack (bottom-left) with the most recent entry at the bottom, and the action trace (bottom-right) with the most recent action at the top of the trace.

Note that we replaced the variable IDs $1\le i\le k$ with letters $a,b,c,d$ as they are more intuitive to us to keep track of, essentially the variable IDs can be treated as variable names.  Similarly, we replaced function IDs with letters $A, B, ...$ as they are easier to understand and function IDs are equivalent to function names.

This visualization corresponds to the step right after a function call with arguments $id=A, l_1=c, l_2=d, o_1=c, o_2=d, r_1=a$, which translates into ``call function A, and set the local variables inside the function $c$ and $d$ with values of $c$ and $d$ in the current scope (the variable values in this level that calls the function), and the return value from the function should be assigned to variable $a$''.

The environment receives this function call and pushes one entry onto the stack, in this case the bottom-most entry in the stack visualization.  Each entry contains:
\begin{itemize}
    \item The function ID of the outer scope at the calling level, in this case function B (``prev:FuncB'').
    \item The previous action at the calling level, in this case calling function A (indicated by ``in:FuncA'' which also indicates we are currently in function A).  Note that this is the ``previous action'' after we return from the function call, so this is always the function call itself.
    \item The variable values before entering the function $(a=2, b=3, c=1, d=2)$.
    \item The variable receiving the return value, in this case $a$.
\end{itemize}

When entering a function, the environment also resets the function ID (top-left corner) to the requested one, and resets the previous action to None (top-right corner).

The agent always predicts the next action based on its input state $s_t$, so it can only notice entering a new function when the function ID changed and / or the previous action became None.

The Return($l'$) action does the opposite.  Once received such an action, the environment pops one entry from the stack, and then resets the function ID, the previous action, the variables values to the recorded values, and lastly, assign the return value specified by variable $l'$ to and overwrite the variable recorded in the stack entry.

We include two more videos showing the execution traces of the sorting agents to further illustrate how our interfaces and functions work.

\section{Implementing known algorithms using our interfaces}\label{appendix:scripted-sorting-algorithms}

In this section we show how we converted known algorithms into our framework as scripted agents, such that they observe the same input state, and emit actions in the same action space as our models.  This exercise can help us identify what are the necessary instructions and input data representation for our models to learn the right algorithms.  These scripted agents can also act as teachers in our imitation learning setting.

\begin{algorithm}[h]
\caption{Bubble sort}\label{alg:bubblesort}
\begin{algorithmic}[1]
\Procedure{BubbleSort}{\A, \low, \high}
\For{$j\gets\high \text{ to } \low$}
\For{$i\gets\low \text{ to } j-1$}
    \If{$\A[i] > \A[i+1]$}
        \State Swap $\A[i], \A[i+1]$
    \EndIf
\EndFor
\EndFor
\EndProcedure
\end{algorithmic}
\end{algorithm}

\algref{alg:bubblesort} above is one example implementation of the classic bubble sort algorithm, it is clear from the double-loop that this algorithm runs in $O(n^2)$ time.  The converted scripted agent is shown in \algref{alg:bubblesort-agent}.

\begin{algorithm}[h]
\caption{Bubble sort agent}\label{alg:bubblesort-agent}
\begin{algorithmic}[1]
\Procedure{BubbleSortAgent}{input state}
\State Let $i=1, j=2, l=3$
\If{$v_i < v_j$}
    \If{$\A[v_i] > \A[v_i + 1]$}
        \returnstmt SwapWithNext($i$)
    \Else
        \returnstmt MoveVar($i$, +1)
    \EndIf
\ElsIf{$v_i = v_j$}
    \State MoveVar($j$, -1)
\Else
    \State AssignVar($i, l$)
\EndIf
\EndProcedure
\end{algorithmic}
\end{algorithm}

Note that to be consistent with our action space, we instead used variables $v_i$ and $v_j$, and used $i,j$ as aliases to index these variables.  Also note that here we used $l$ to refer to the index of the variable that has $\low$ as its value.  Because of the way we initialize the variables, $v_1=v_3=\low$ and $v_2=v_4=\high$, it is easy to use one variable to track the value of $\low$.

Also note that this scripted agent does not terminate, and this is true for all our setups, as termination is handled in the environment.  For sorting, the environment keeps track of the number of neighboring pairs in the right order, and terminates execution when this number reaches $n$, updating this number is a constant time operation.

Similarly, \algref{alg:insertionsort} and \algref{alg:insertionsort-agent} shows an example implementation of insertion sort and how it translates into a scripted agent in our framework.

\begin{algorithm}[h]
\caption{Insertion sort}\label{alg:insertionsort}
\begin{algorithmic}[1]
\Procedure{InsertionSort}{\A, \low, \high}
\For{$i\gets\low \text{ to } \high$}
\State $j\gets i$
\While{$j > \low\text{ and }\A[j] < \A[j-1]$}
    \State Swap $\A[j], \A[j-1]$
    \State $j\gets j-1$
\EndWhile
\EndFor
\EndProcedure
\end{algorithmic}
\end{algorithm}

\begin{algorithm}[h]
\caption{Insertion sort agent}\label{alg:insertionsort-agent}
\begin{algorithmic}[1]
\Procedure{InsertionSortAgent}{input state}
\State Let $i=1, j=2$
\If{$v_i < v_j$}\Comment{Sets initial value of $v_j$}
\returnstmt AssignVar($j, i$)
\ElsIf{$v_i=v_j$}
\returnstmt MoveVar($i$, +1)
\Else
    \If{$\A[v_j] > \A[v_j+1]$}
        \returnstmt SwapWithNext($j$)
    \ElsIf{$v_j > \low \text{ and }\A[v_j] < \A[v_j-1]$}
        \returnstmt MoveVar($j$, -1)
    \Else
        \returnstmt AssignVar($j, i$)
    \EndIf
\EndIf
\EndProcedure
\end{algorithmic}
\end{algorithm}

\algref{alg:quicksort} shows an example implementation of quick sort, which always picks the high end of the range as the pivot value for partitioning the list.  This implementation uses two functions, QuickSort which sorts a range of an array, and Partition which partitions an array range into two parts, with regard to a pivot value, the elements in the range smaller than the pivot is put on the left side of the pivot, and the elements larger than the pivot is put on the right side of it.  Notably, the QuickSort function calls itself recursively.

\begin{algorithm}[h]
\caption{Quick sort}\label{alg:quicksort}
\begin{algorithmic}[1]
\Procedure{QuickSort}{\A, \low, \high}
\If{$\low < \high$}
    \State $i\gets$ Partition($\A, \low, \high$)    \label{line:block1-start}
    \State QuickSort($\A, \low, i-1$)
    \State QuickSort($\A, i+1, \high$)  \label{line:block1-end}
\EndIf
\EndProcedure
\State
\Procedure{Partition}{\A, \low, \high}
\State $i\gets\low$
\For{$j\gets \low\text{ to }\high-1$}
    \If{$\A[j] < \A[\high]$}\Comment{$\A[\high]$ as pivot}
        \State Swap $\A[i], \A[j]$  \label{line:block2-start}
        \State $i\gets i + 1$   \label{line:block2-end}
    \EndIf
\EndFor
\State Swap $\A[i], \A[\high]$
\returnstmt $i$
\EndProcedure
\end{algorithmic}
\end{algorithm}

This quicksort implementation has one subtlety that is less of a problem for the bubble / insertion sort implementations.  Here we have in some cases more than one statements in one code block.  For example the code block from line \ref{line:block1-start} to \ref{line:block1-end} is under the same if-branch, in order to correctly predict which action to use, our model must have enough knowledge about the execution state to disambiguate the conditions for the 3 different situations.

We found adding the previously executed action to the input state $s_t$ is sufficient to handle disambiguation for this quick sort implementation.  \algref{alg:quicksort-agent} shows the converted quick sort scripted agent.

In this implementation, we used ``prev'' to denote the previous action, and used the aliases $i=1, j=2, l=3, h=4$ to index the 4 variables $v_1, v_2, v_3, v_4$, such that $v_i$ and $v_j$ are two free variables that roughly corresponds to the $i$ and $j$ variables in the Partition function, $v_l$ and $v_h$ are the two variables that corresponds to the $\low$ and $\high$ variables in \algref{alg:quicksort}.  We also used the more intuitive notation ``$v_a\gets$Function$f$($v_b\gets v_c, v_d\gets v_e$)'' to represent the FunctionCall$(f, b, d, c, e, a)$ which means call function $f$, and assign the current value of $v_c$ and $v_e$ to the local variables $v_b$ and $v_d$ in the function, and assign the return value to $v_a$.  In this implementation, function 1 corresponds to the QuickSort function in \algref{alg:quicksort} and function 2 corresponds to the Partition function.  As we can see from \algref{alg:quicksort}, the function QuickSort does not return, however to be compatible with the function call action interface, our function 1 still returns a variable $v_h$, however it is never used.  On the other hand, function 2 returns the pivot index $v_i$.

\algref{alg:binary-search-agent} shows the scripted agent for binary search in ordered lists.  Note that here we only used 3 variables and again the use of the previous action helps disambiguate different situations.

\begin{algorithm}[h]
\caption{Binary search agent}\label{alg:binary-search-agent}
\begin{algorithmic}[1]
\Procedure{BinarySearchAgent}{input state}
\State Let $i=1, l=2, h=3$
\If{$v_l > v_h$ or ($v_i = v_l$ and $\A[v_i] > q$) or ($v_i = v_h$ and $\A[v_i] < q$)}
    \returnstmt NotFound()
\ElsIf{$\A[v_i] = q$}
    \returnstmt Found($i$)
\ElsIf{prev in (None, AssignVar($l, i$), Assign($h, i$))}
    \returnstmt AssignMid($i, l, h$)
\ElsIf{prev = AssignMid($i, l, h$)}
    \If{$\A[v_i] < q$}
        \returnstmt MoveVar($i$, +1)
    \Else
        \returnstmt MoveVar($i$, -1)
    \EndIf
\ElsIf{prev = MoveVar($i$, +1)}
    \returnstmt AssignVar($l, i$)
\ElsIf{prev = MoveVar($i$, -1)}
    \returnstmt AssignVar($h, i$)
\Else
    \returnstmt None
\EndIf
\EndProcedure
\end{algorithmic}
\end{algorithm}

\algref{alg:dfs-knapsack-agent} shows the scripted DFS agent for the 0/1 knapsack problem.  Note that here ``Knapsack()'' is a function call, since we only need one function in this setup, the function ID is ignored, and since this function does not need arguments or return values, this action doesn't take any arguments.  $i$ is the current index value, and $w$ is the weight of the current solution so far.

\begin{algorithm}[h]
\caption{DFS knapsack agent}\label{alg:dfs-knapsack-agent}
\begin{algorithmic}[1]
\Procedure{DfsKnapsackAgent}{input state}
\If{prev = None}
    \If{$i \ge n$ or $w > W$}
        \returnstmt Return()
    \Else
        \returnstmt Put()
    \EndIf
\Else
    \If{prev = Put()}
        \returnstmt MoveVar(+1)
    \ElsIf{prev = MoveVar(+1)}
        \returnstmt Knapsack()
    \ElsIf{prev = Knapsack()}
        \returnstmt MoveVar(-1)
    \ElsIf{prev = MoveVar(-1)}
        \If{$i\in sol$}
            \returnstmt Pop()
        \Else
            \returnstmt Return()
        \EndIf
    \ElsIf{prev = Pop()}
        \returnstmt MoveVar(+1)
    \Else
        \returnstmt None
    \EndIf
\EndIf
\EndProcedure
\end{algorithmic}
\end{algorithm}

\paragraph{Remark: } We can see from the example implementations of the scripted agents in this section that the design decisions of our interfaces, \ie~the inputs and output actions, are heavily inspired by the known algorithms.  On the other hand, these interfaces are designed to contain more information than needed by these algorithms, such that our models have the opportunity to learn and surpass the known algorithms.

\begin{algorithm*}[ht]
\caption{Quick sort agent}\label{alg:quicksort-agent}
\begin{algorithmic}[1]
\Procedure{QuickSortAgent}{input state}
\State Let $i=1, j=2, l=3, h=4$
\If{FunctionID = None}
    \returnstmt $v_h\gets$ Function1($v_l \gets v_l, v_h\gets v_h$)
\ElsIf{FunctionID = 1}\Comment{QuickSort}
    \If{$v_l < v_h$}
        \If{prev = None}
            \returnstmt $v_i\gets$ Function2($v_l\gets v_l, v_h\gets v_h$)
        \ElsIf{prev = ($v_i\gets$ Function2($v_l\gets v_l, v_h\gets v_h$))}
            \returnstmt AssignVar($j, i$)
        \ElsIf{prev = AssignVar($j, i$)}
            \returnstmt MoveVar($i$, -1)
        \ElsIf{prev = MoveVar($i$, -1)}
            \If{$v_i > v_l$}
                \returnstmt $v_i\gets$ Function1($v_l\gets v_l, v_h\gets v_i$)
            \Else
                \returnstmt MoveVar($j$, +1)
            \EndIf
        \ElsIf{prev = ($v_i\gets$ Function1($v_l\gets v_l, v_h\gets v_i$))}
            \returnstmt MoveVar($j$, +1)
        \ElsIf{prev = MoveVar($j$, +1) and $v_j < v_h$}
            \returnstmt $v_h\gets$ Function1($v_l\gets v_j, v_h\gets v_h$)
        \Else
            \returnstmt Return($h$)
        \EndIf
    \Else
        \returnstmt Return($h$)
    \EndIf
\Else\Comment{Function ID = 2, Partition}
    \If{prev = None}
        \returnstmt AssignVar($i, l$)
    \ElsIf{prev = AssignVar($i, l$)}
        \returnstmt AssignVar($j, l$)
    \ElsIf{$v_j < v_h$}
        \If{prev = Swap($i, j$)}
            \returnstmt MoveVar($i$, +1)
        \ElsIf{(prev = AssignVar($j, l$) or prev = MoveVar($j$, +1)) and $\A[v_j] < \A[v_h]$}
            \If{$v_i \ne v_j$}
                \returnstmt Swap($i, j$)
            \Else
                \returnstmt MoveVar($i$, +1)
            \EndIf
        \Else
            \returnstmt MoveVar($j$, +1)
        \EndIf
    \ElsIf{prev = MoveVar($j$, +1)}
        \returnstmt Swap($i, h$)
    \Else
        \returnstmt Return($i$)
    \EndIf
    
\EndIf
\EndProcedure
\end{algorithmic}
\end{algorithm*}

\section{More experiment results}
\label{appendix:more-results}

In this section we present a few additional experiment results to supplement the results in the main paper.

\begin{table*}[th]
    \centering
    \caption{Performance for models that uses the full-view interface.  Each cell shows the average episode length / percentage of episodes solved.  The maximum episode length is $n^2$.  The cells {\setlength{\fboxsep}{0pt}\colorbox{gray!50}{high-lighted in gray}} indicates the training size range.}
    \label{tab:full-view-efficiency}
    \begin{tabular}{rcccccc}
    \toprule
        Instance size & 5 & 10 & 20 & 30 & 40 & 50 \\
    \midrule
        Teacher & 2.8 / 100 & \graycell 6.8 / 100 & \graycell 16.6 / 100 & 25.9 / 100 & 35.4 / 100 & 45.7 / 100 \\
        Supervised learning & 8.8 / 72 & \graycell 6.8 / 100 & \graycell 16.6 / 100 & 611.4 / 33 & 1600.0 / 0 & 2500.0 / 0\\
        RL & 2.8 / 100 & \graycell 6.8 / 100  & \graycell 16.6 / 100 & 26.0 / 100 & 1506.7 / 6 & 2500.0 / 0 \\
        \bottomrule
    \end{tabular}
\end{table*}

\tabref{tab:full-view-efficiency} shows the efficiency results for the agents trained with the full-view interface.  With the full-view observation, our models can learn to fit data in the training range quite well, achieving 100\% solve rate and solving all the instances using exactly the same number of steps as the teacher.  However, beyond the training range, the solve rate drops quickly, and the episode length also increases rapidly.  We have also tried to use RL in this setting, notably, with RL the learned model seems a bit more robust, and generalizing slighly better than the one trained with pure supervised learning.

\begin{table}[th]
    \centering
    \caption{Evaluating quick sort and learned model on large instances.}
    \label{tab:eval-quick-sort-large}
    \begin{tabular}{rcc}
    \toprule
        Instance Size & 10,000 & 100,000 \\
        \midrule
        Quick sort & 368,338.6 & 4,602,545.1 \\
        RL + imitation & \bf 338,947.9 & \bf 4,227,102.4 \\
        \bottomrule
    \end{tabular}
\end{table}

\tabref{tab:eval-quick-sort-large} shows the evaluation results for quick sort and the learned model on very large instances.  The evaluation of the learned model on 100 instances of size 100,000 took 3 days.  This evaluation is slow as we need to evaluate the neural network millions of times sequentially.  None of the scripted or learned $O(n^2)$ algorithms finish evaluation in 3 days.

Note that even though the model was trained on instances of size 30-50, they still generalize perfectly to instances of size 100,000, more than 1000 times larger than the instances seen by the model during training.  Also, these large instances require millions of steps to solve, also thousands of times more than required during training.

\figref{fig:curves-bubble-insertion}, \ref{fig:curves-quick-sort-reward}, \ref{fig:curves-quick-sort-unroll-len} show a few typical training curves for the bubble-insertion interface and the quick-sort interface, plotting the average episode length evaluated on 100 validation instances.  We also show the performance for the corresponding teacher agent, insertion sort and quick sort respectively in all plots for comparison.  For the bubble-insertion interface (\secref{sec:bubble-insertion}), we train on instances of size 10-20 and during training, the maximum episode length is set to $20^2=400$, and therefore all the curves start from 400.  For any instance, an episode length smaller than 400 therefore indicates that the agent successfully solved it, as an episode terminates when the instance is correctly solved, or when reaching the maximum episode length.  We can observe that the model learned to do sorting quite quickly, and spend the most of the training process refining the policy.  In this case the models were trained with shaping reward and an imitation loss.

We show both the average across 5 seeds (left figure) as well as the best across 5 seeds (right figure).  For this task we care mostly about the best performance, as we are trying to find the best neural program for the task, and it doesn't matter which run the best model comes from.  As you can see from \figref{fig:curves-bubble-insertion} (b) our learning process can find algorithms that outperform the teacher quite quickly.

\begin{figure}[t]
    \centering
    \begin{tabular}{cc}
        \includegraphics[width=0.45\textwidth]{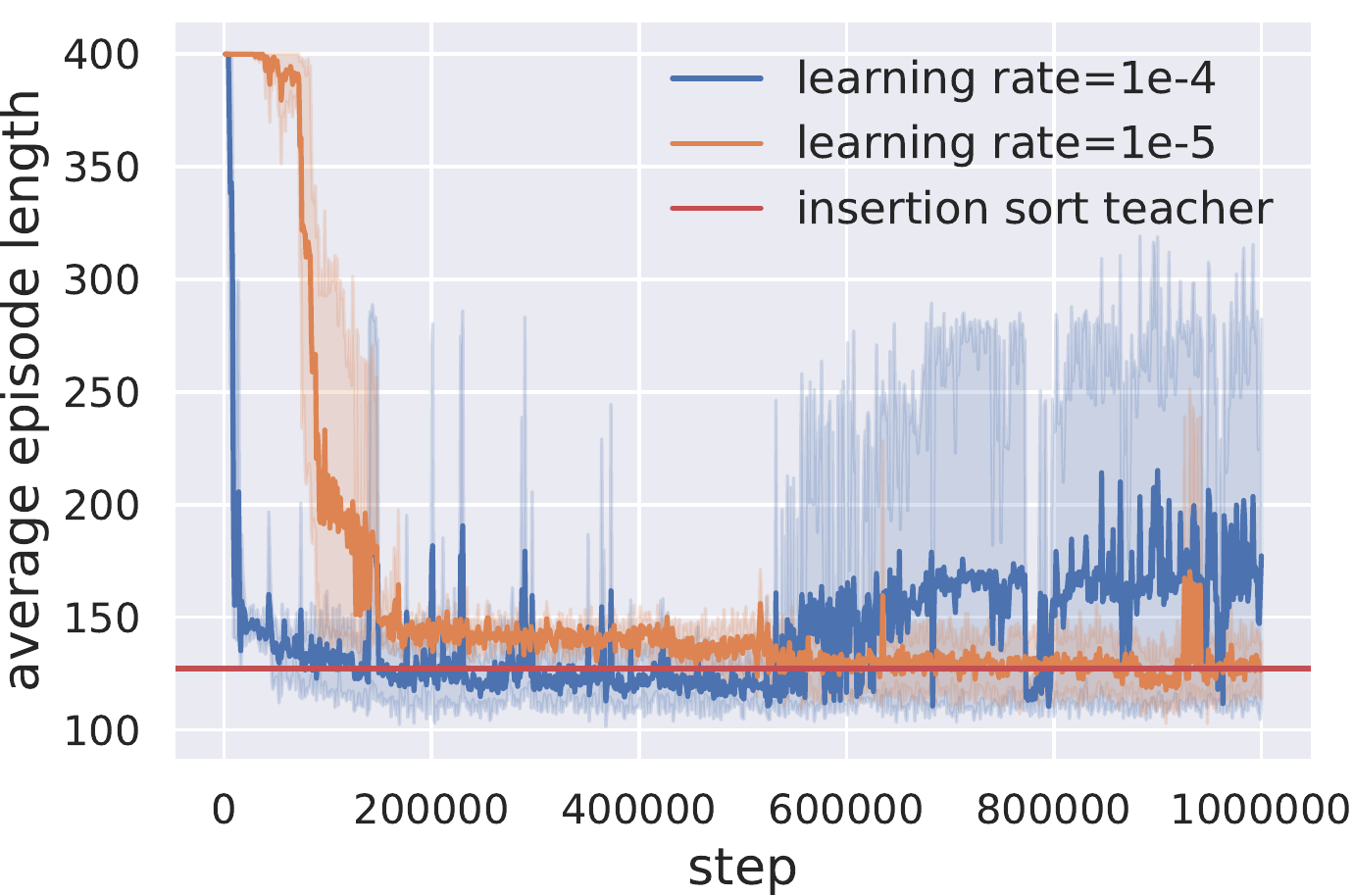} & 
        \includegraphics[width=0.45\textwidth]{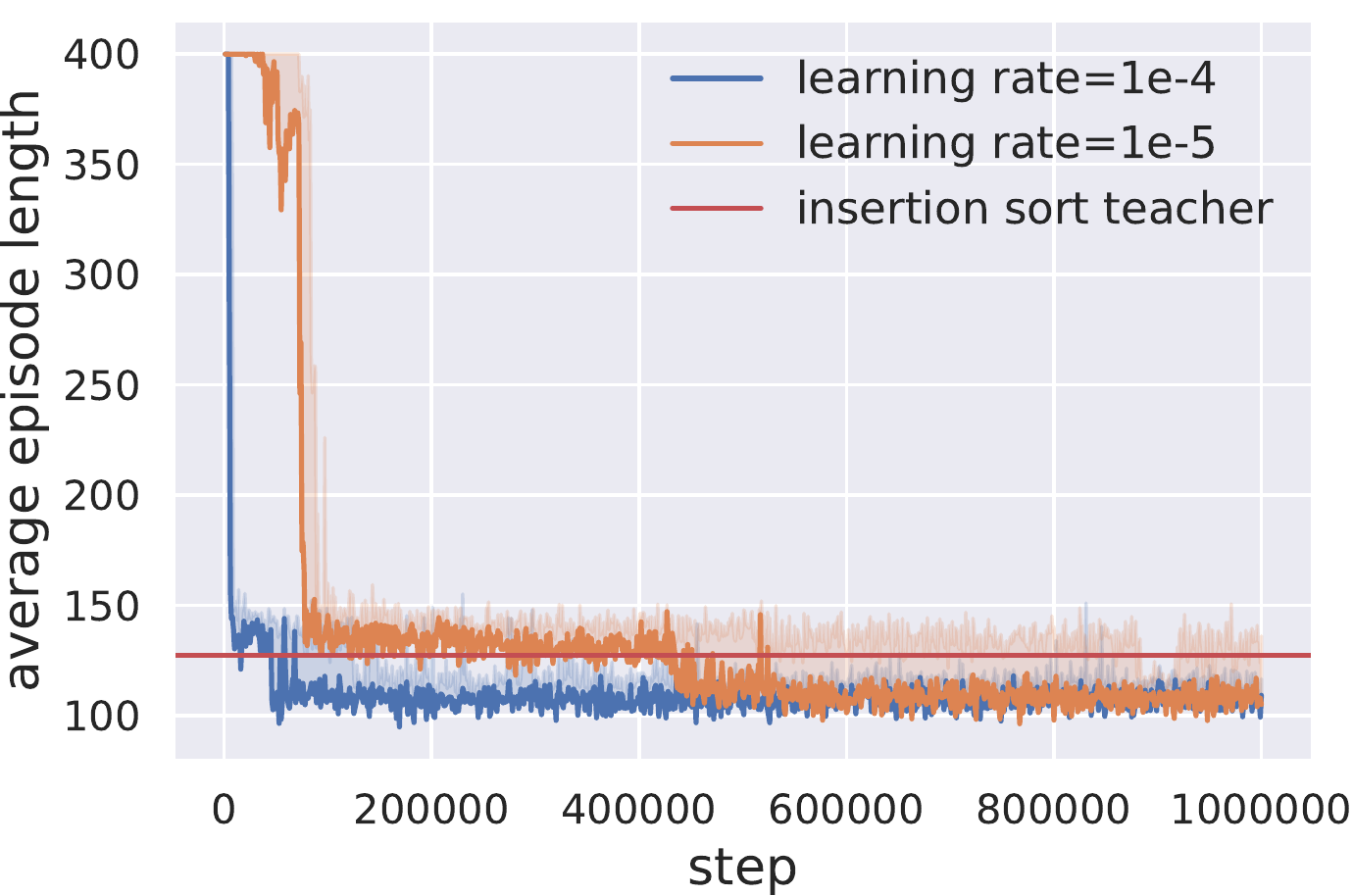}
        \\
        (a) Average across 5 seeds & (b) Best across 5 seeds
    \end{tabular}
    \caption{Typical training curves for the bubble-insertion interface (\secref{sec:bubble-insertion}).  The curves show the mean (left) and minimum (right) of the average episode length metric for models trained across 5 random seeds for the same hyperparameter setting, as well as a 95\% confidence interval.  This figure compares the effect of different learning rate values on the performance.}
    \label{fig:curves-bubble-insertion}
\end{figure}

\begin{figure}[t]
    \centering
    \begin{tabular}{cc}
        \includegraphics[width=0.45\textwidth]{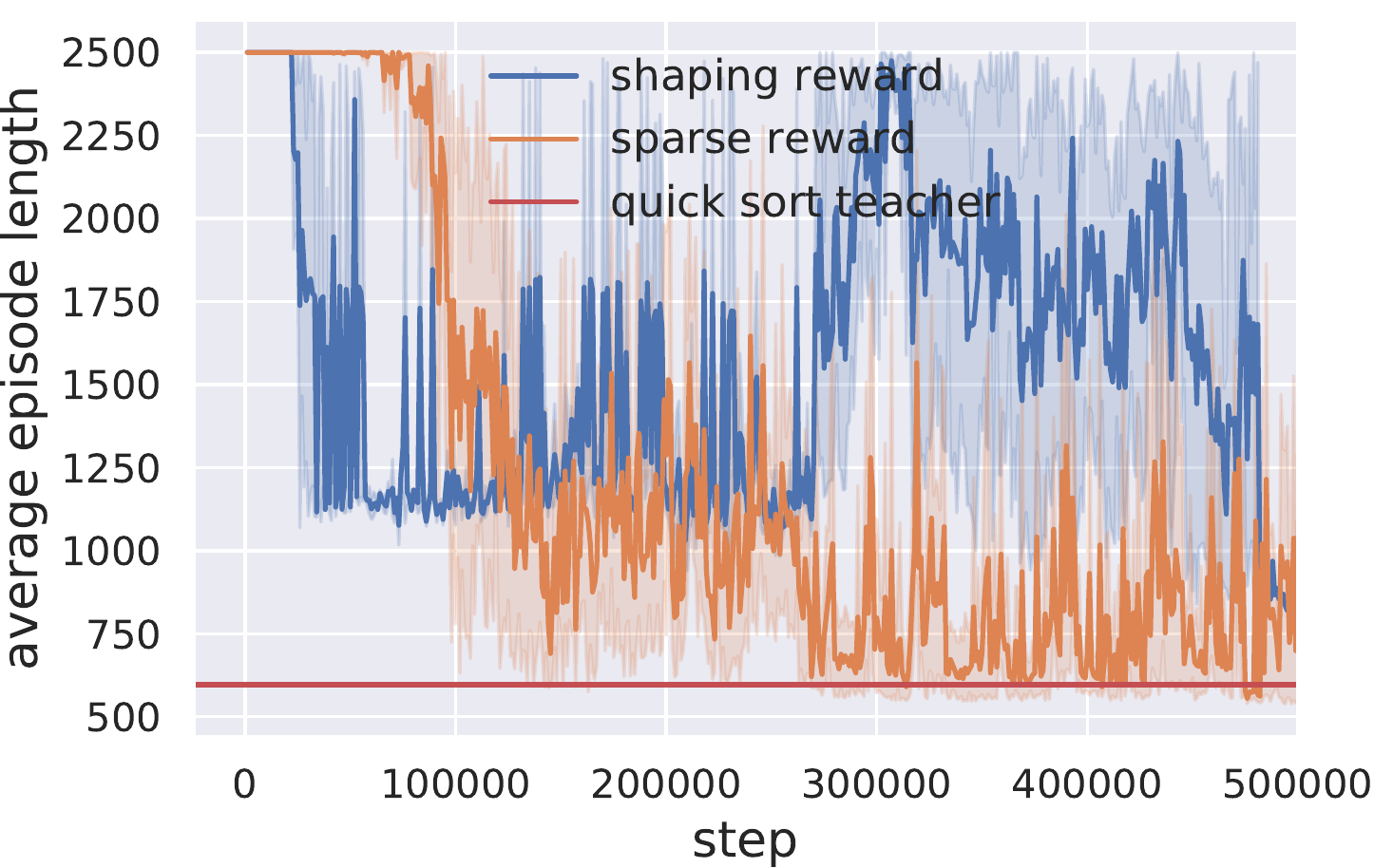} & 
        \includegraphics[width=0.45\textwidth]{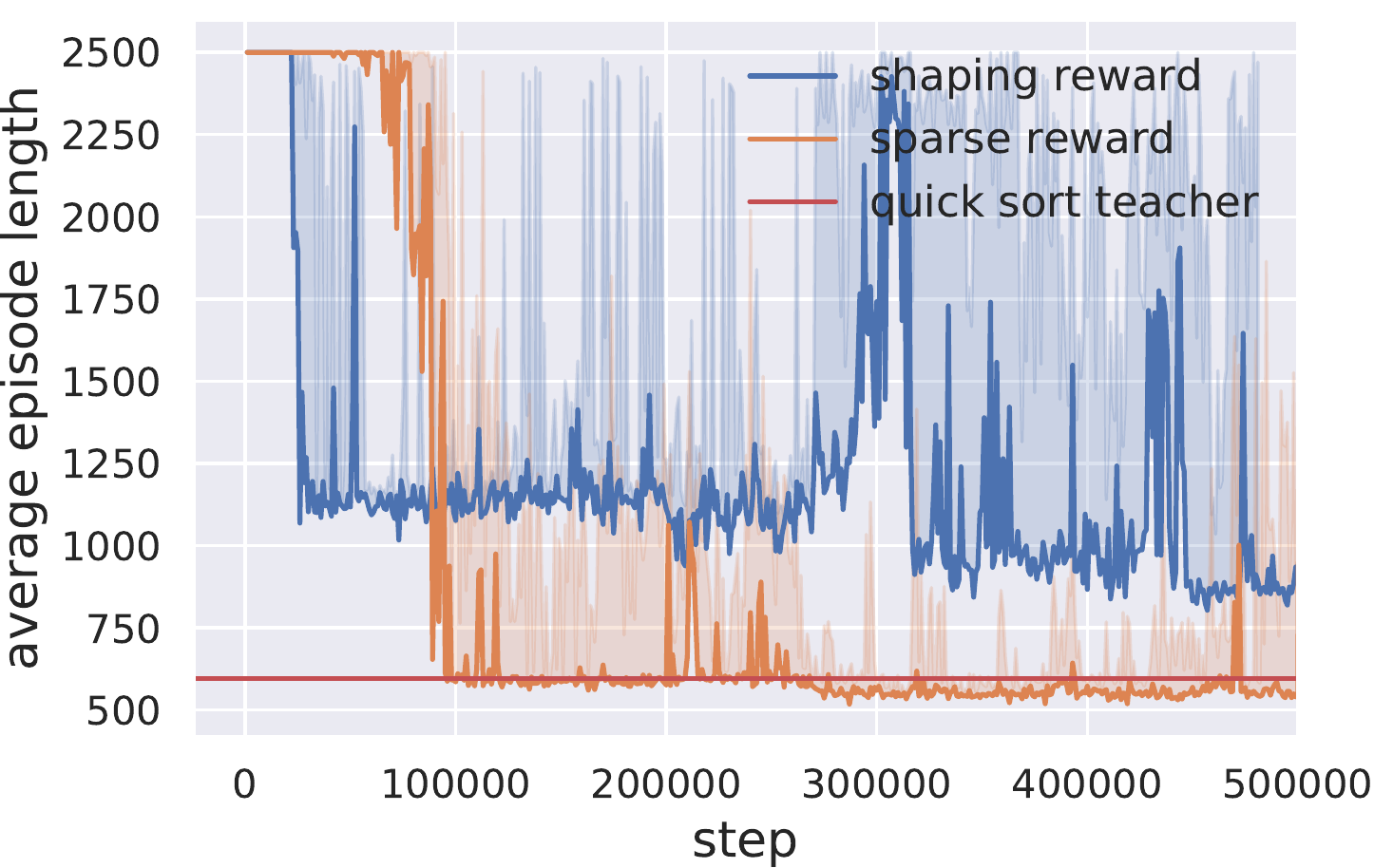}
        \\
        (a) Average across 5 seeds & (b) Best across 5 seeds
    \end{tabular}
    \caption{Typical training curves for the quick sort interface (\secref{sec:functions}). The curves show the mean(left) and minimum (right) of the average episode length metric for models trained across 5 random seeds for the same hyperparameter setting, as well as a 95\% confidence interval.  This figure compares training with a sparse reward vs shaping reward.}
    \label{fig:curves-quick-sort-reward}
\end{figure}

\begin{figure}[t]
    \centering
    \begin{tabular}{cc}
        \includegraphics[width=0.45\textwidth]{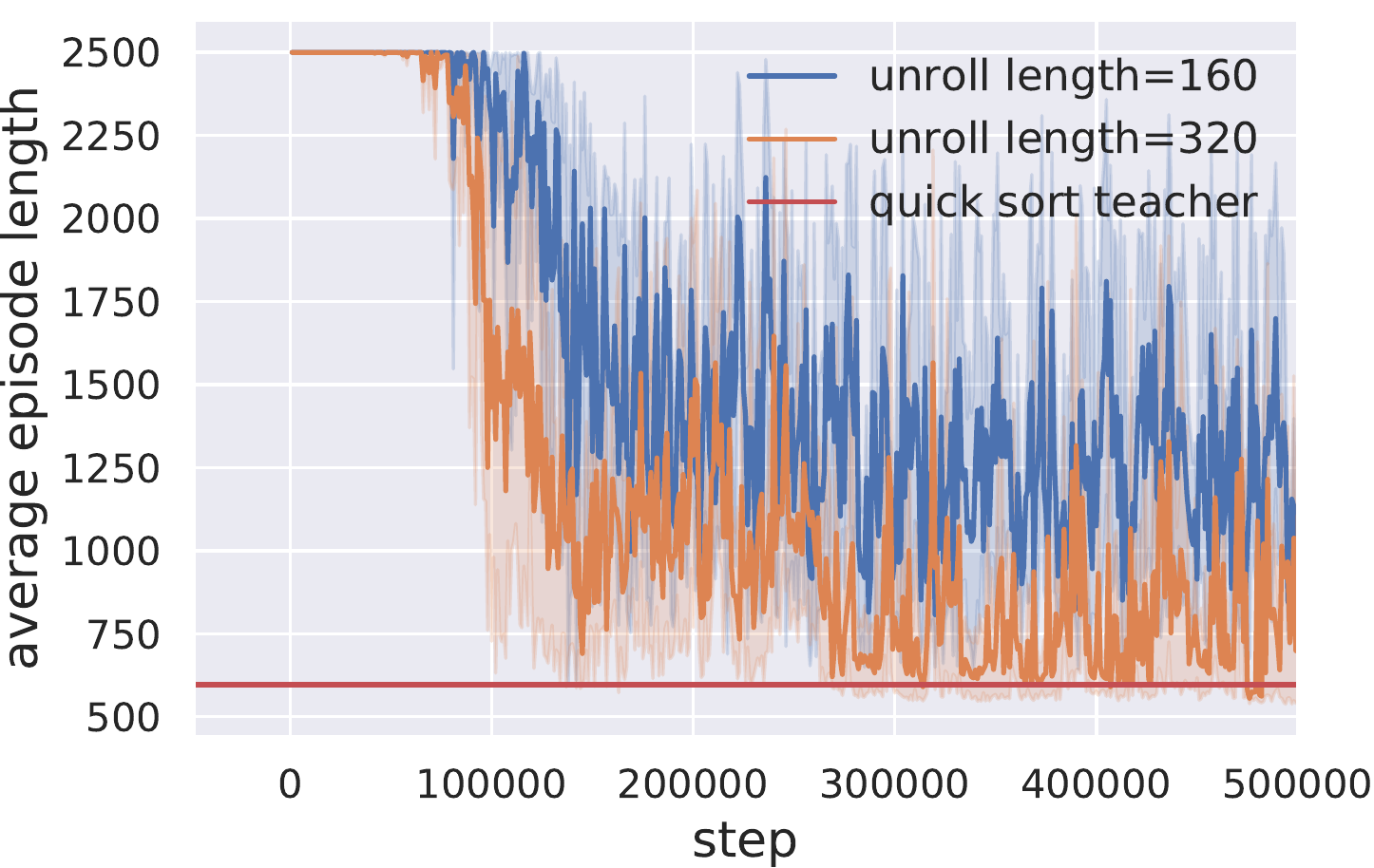} & 
        \includegraphics[width=0.45\textwidth]{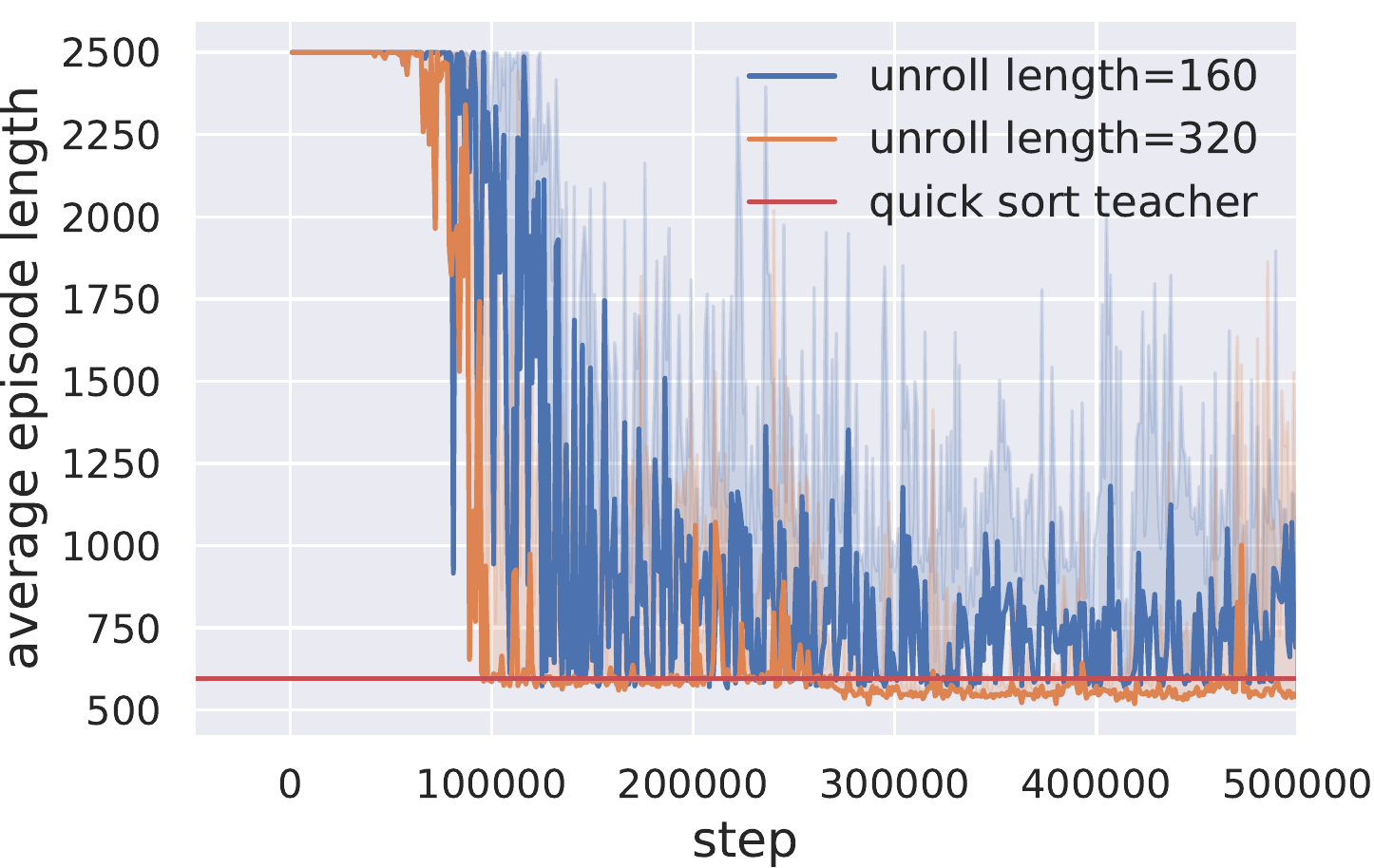}
        \\
        (a) Average across 5 seeds & (b) Best across 5 seeds
    \end{tabular}
    \caption{Typical training curves for the quick sort interface (\secref{sec:functions}). The curves show the mean(left) and minimum (right) of the average episode length metric for models trained across 5 random seeds for the same hyperparameter setting, as well as a 95\% confidence interval.  This figure compares training with an unroll length of 160 steps vs 320 steps.}
    \label{fig:curves-quick-sort-unroll-len}
\end{figure}

In \figref{fig:curves-quick-sort-reward} and \ref{fig:curves-quick-sort-unroll-len} we show
typical training curves for the quick sort interface (\secref{sec:functions}).  The models are trained on instances of size 30-50 and during training the maximum episode length is set to 2,500.  In this case we can see that with the shaping reward, the model learned to do sorting much faster, but with sparse reward the model is able to achieve significantly better performance, hence learning better algorithms.  Again if we look at the figures on the right hand side we can see that our learning process is able to find good neural programs that outperform the teacher reliably.

Despite the strong empirical results, we cannot yet prove that our learned model implements the correct algorithm, in the sense that running the learned model would guarantee sorting the array range correctly, regardless of the initial conditions.  On the contrary, \citep{cai2017making} claimed provably correct guarantees by showing that the learned model imitates the teacher policy perfectly, and assuming the teacher is correct, the learned model is also correct.  We could use the same strategy to prove our model trained with pure supervised learning to be correct, by enumerating all the possible inputs and show that the model would output the same actions as the teacher.  However, the agents learned with RL no longer imitates the teacher, therefore the same strategy no longer works.  How to prove the learned agents are guaranteed to be correct is a challenging direction we leave for future work.

To conclude this section, we also included two videos showing the execution traces for the sorting task, under the bubble / insertion and quick sort interfaces separately, comparing the learned models with scripted agents.  These execution traces clearly show that the agents have learned different behaviors than the teacher, and can hopefully help clarify what the model is actually doing.

\end{document}